\newcommand{\eqdef}{\stackrel{\footnotesize\rm def}{=}}
\newif\ifdraft
\title{Neural Networks Learning and Memorization with (almost) no Over-Parameterization}
\author{
	\vspace{1cm}
  Amit Daniely\thanks{Hebrew University and Google} 
}
\begin{document}
\maketitle
\setcounter{page}{0}

\thispagestyle{empty}
\maketitle

\begin{abstract}
Many results in recent years established polynomial time learnability of various models via neural networks algorithms
(e.g. \cite{andoni2014learning, daniely2016toward, daniely2017sgd, cao2019generalization, ziwei2019polylogarithmic, zou2019improved, ma2019comparative, du2018gradient, arora2019fine, song2019quadratic, oymak2019towards, ge2019mildly, brutzkus2018sgd}).
However, unless the model is linear separable~\cite{brutzkus2018sgd}, or the activation is a polynomial~\cite{ge2019mildly}, these results require very large networks -- much more than what is needed for the mere existence of a good predictor.

In this paper we prove that SGD on depth two neural networks can memorize samples, learn polynomials with bounded weights, and learn certain kernel spaces, with {\em near optimal} network size, sample complexity, and runtime. In particular, we show that SGD on depth two network with $\tilde{O}\left(\frac{m}{d}\right)$ hidden neurons (and hence $\tilde{O}(m)$ parameters)  can memorize $m$ random labeled points in $\sphere^{d-1}$.

\end{abstract}

\newpage

\tableofcontents

\newpage

\section{Introduction}
Understanding the models (i.e. pairs $(\cd,f^*)$ of input distribution $\cd$ and target function $f^*$)
on which neural networks algorithms guaranteed to learn a good predictor is at the heart of deep learning theory today.
In recent years, there has been an impressive progress in this direction. It is now known that neural networks algorithms can learn, in polynomial time, linear models, certain kernel spaces, polynomials, and memorization models (e.g. \cite{andoni2014learning, daniely2016toward, daniely2017sgd, cao2019generalization, ziwei2019polylogarithmic, zou2019improved, ma2019comparative, du2018gradient, arora2019fine, song2019quadratic, oymak2019towards, ge2019mildly, brutzkus2018sgd}).

Yet, while such models has been shown to be learnable in polynomial time and polynomial sized networks, the required size (i.e., number of parameteres) of the networks is still very large, unless the model is linear separable~\cite{brutzkus2018sgd}, or the activation is a polynomial~\cite{ge2019mildly}. This means that the proofs are valid for networks whose size is significantly larger then the minimal size of the network that implements a good predictor.

We make a progress in this direction, and prove that certain NN algorithms can learn memorization models, polynomials, and kernel spaces, with {\em near optimal} network size, sample complexity, and runtime (i.e. SGD iterations). 
Specifically we assume that the instance space is $\sphere^{d-1}$ and consider depth $2$ networks with $2q$ hidden neurons. Such networks calculate a function of the form
\[
h_{W,\bu}(\x) = \sum_{i=1}^{2q}u_i\sigma\left(\inner{\w_i,\x}\right)  = \inner{\bu,\sigma\left(W\x\right)}
\]
We assume that the network is trained via SGD, starting from random weights that are sampled from the following variant of Xavier initialization~\cite{glorot2010understanding}: $W$ will be initialized to be a duplication $W = \begin{bmatrix}W'\\W'\end{bmatrix}$ of a matrix  $W'$ of standard Gaussians and $\bu$ will be a duplication of the all-$B$ vector in dimension $q$, for some $B>0$,  with its negation. We will use rather large $B$, that will depend on the model that we want to learn. We will prove the following results

\paragraph*{Memorization} In the problem of memorization, we consider SGD training on top of a sample $S = \left\{(\x_1,y_1),\ldots,(\x_m,y_m)\right\}$. The goal is to understand how large the networks should be, and (to somewhat leaser extent) how many SGD steps are needed in order to memorize $1-\epsilon$ fraction of the examples, where an example is considered memorized if $y_ih(\x_i) > 0$ for the output function $h$.
Most results, assumes that the points are random or ``looks like random" in some sense.

In order to memorize even just slightly more that half of the $m$ examples we need a network with at least $m$ parameters (up to poly-log factors). However, unless $m\le d$ (in which case the points are linearly separable), best know results require much more than $m$ parameters, and the current state of the art results~\cite{song2019quadratic, oymak2019towards} require $m^2$ parameters. We show that if the points are sampled uniformly at random from $\sphere^{d-1}\times\{\pm 1\}$, then {\em any fraction} of the examples can be memorized by a network with $\tilde{O}(m)$ parameters, and $\tilde{O}\left(\frac{m}{\epsilon^2}\right)$ SGD iterations. Our result is valid for the hinge loss, and most popular activation functions, including the ReLU.

\paragraph*{Bounded distributions}
Our results for polynomials and kernels will depend on what we call the boundedness of the data distribution. We say that a distribution $\cd$ on $\sphere^{d-1}$ is $R$-bounded if for every $\bu\in\sphere^{d-1}$, $\E_{\x\sim\cd}\inner{\bu,\x}^2 \le \frac{R^2}{d}$.
 To help the reader to calibrate our results, we first note that by Cauchy-Schwartz, 
any distribution $\cd$ is $\sqrt{d}$-bounded, and this bound is tight in the cases that $\cd$ is supported on a single point. Despite that, many distributions of interest are $O(1)$-bounded or even $\left(1+o(1)\right)$-bounded. This includes the uniform distribution on $\sphere^{d-1}$, the uniform distribution on the discrete cube $\left\{\pm \frac{1}{\sqrt{d}}\right\}^d$, the uniform distribution on $\Omega\left(d\right)$ random points, and more (see section \ref{sec:bounded}). For simplicity, we will phrase our results in the introduction for $O(1)$-bounded distribution. We note that if the distribution is $R$-bounded (rather than $O(1)$-bounded), our results suffer a multiplicative factor of $R^2$ in the number of parameters, and remains the same in the runtime (SGD steps).

\paragraph*{Polynomials}
For the sake of clarity, we will describe our result for learning even polynomials, with ReLU networks, and the loss being the logistic loss or the hinge loss. Fix a constant integer $c>0$ and consider the class of even polynomials of degree $\le c$ and coefficient vector norm at most $M$. Namely,
\[
\cp^M_{c} = \left\{ p(\x) = \sum_{|\alpha | \text{ is even and }\le c}a_\alpha\x^{\alpha} : \sum_{|\alpha | \text{ is even and }\le c}a^2_\alpha \le M^2 \right\}
\]
where for $\alpha\in \{0,1,2,\ldots\}^d$ and $\x\in\reals^d$ we denote $\x^\alpha = \prod_{i=1}^dx_i^{\alpha_i}$ and $|\alpha | = \sum_{i=1}^d\alpha_i$. 
Learning the class $\cp^M_d$ requires a networks with at least $\Omega\left(M^2\right)$ parameters (and this remains true even if we restrict to $O(1)$-bounded distributions). We show that for $O(1)$-bounded distributions, SGD learns $\cp^M_{c}$, with error parameter $\epsilon$ (that is, it returns a predictor with error $\le \epsilon$), using a network with $\tilde{O}\left(\frac{M^2}{\epsilon^2} \right)$ parameters and $O\left(\frac{M^2}{\epsilon^2} \right)$ SGD iterations.

\paragraph*{Kernel Spaces}
The connection between networks and kernels has a long history (early work inclues \cite{williams1997infinite, rahimi2009weighted} for instance). In recent years, this connection was utilized to analyze the capability of neural networks algorithm (e.g. \cite{andoni2014learning, daniely2016toward, daniely2017sgd, cao2019generalization, ziwei2019polylogarithmic, zou2019improved, ma2019comparative, du2018gradient, arora2019fine, song2019quadratic, oymak2019towards, ge2019mildly}). In fact, virtually all known non-linear learnable models, including memorization models, polynomials, and kernel spaces utilize this connection.
Our paper is not different, and our result for polynomials is a corollary of a more general result about learning certain kernel spaces, that we describe next. Our result about memorization is not a direct corollary, but is also a refinement of that result.
We consider the kernel $k:\sphere^{d-1}\times\sphere^{d-1} \to \reals$ given by
\begin{equation}\label{eq:ntk_intro}
k(\x,\y) = \inner{\x,\y}\cdot \E_{\w\sim\cn(I,0)}\sigma'\left(\inner{\w,\x},\inner{\w,\y}\right)
\end{equation}
which is a variant of the Neural Tangent Kernel~\cite{jacot2018neural} (see section \ref{sec:ntk}). We show that for $O(1)$-bounded distributions, SGD learns functions with norm $\le M$ in the corresponding kernel space, with error parameter $\epsilon$, using a network with $\tilde{O}\left(\frac{M^2}{\epsilon^2} \right)$ parameters and $O\left(\frac{M^2}{\epsilon^2} \right)$ SGD iterations. We note that the network size is optimal up to the dependency on $\epsilon$ and poly-log factors, and the number of iteration is optimal up to a constant factor.
This result is valid for most Lipschitz losses including the hinge loss and the log-loss, and for most popular activation functions, including the ReLU.

\paragraph*{Technical Contribution}
For weights $\left(W,\bu\right)$ and $\x\in\sphere^{d-1}$ we denote by $\Psi_{W,\bu}(\x)\in\reals^{2q\times d}$ the gradient, w.r.t. the hidden weights $W$, of $h_{W,\bu}(\x)$. 
Our initialization scheme ensures that the SGD on the network, at the onset of the initialization process, is approximately equivalent to linear SGD starting at $0$, on top of the embedding $\Psi_{W,\bu}$, where $(W,\bu)$ are the initial weights. Now, it holds that 
\[
k(\x,\y) = \E_{W}\left[\frac{\inner{\Psi_{W,\bu}(\x), \Psi_{W,\bu}(\x)} }{2qB}\right]
\]
where $k$ is the kernel defined in \eqref{eq:ntk_intro}. Hence, if the network is large enough, we would expect that $k(\x,\y) \approx \frac{\inner{\Psi_{W,\bu}(\x), \Psi_{W,\bu}(\x)} }{2qB}$, and therefore that SGD on the network, in the onset of the initialization process, is approximately equivalent to linear SGD starting at $0$, w.r.t. the kernel $k$.
Our main technical contribution is the analysis of the rate (it terms of the size of the network) in which  $\frac{\inner{\Psi_{W,\bu}(\x), \Psi_{W,\bu}(\x)} }{2qB}$ converges to $k(\x,\y)$.

We would like to mention \citet{fiat2019decoupling} whose result shares some ideas with our proof. In their paper it 
is shown that for the square loss and the ReLU activation, linear optimization over the embedding $\Psi_{W,\bu}$ can 
memorize $m$ points with $\tilde{O}(m)$ parameters.

\section{Preliminaries}

\subsection{Notation} 
We denote
vectors by bold-face letters (e.g.\ $\x$), 
matrices by upper case letters (e.g.\ $W$), and collection of matrices by bold-face upper case letters (e.g.\ $\W$). 
We denote the $i$'s row in a matrix $W$ by $\w_i$.
The $p$-norm of $\x
\in \reals^d$ is denoted by $\|\x\|_p = \left(\sum_{i=1}^d|x_i|^p\right)^{\frac{1}{p}}$, and for a matrix $W$, $\|W\|$ is the spectral norm $\|W\| = \max_{\|\x\|=1}\|W\x\| $. We will also use the convention that $\|\x\|=\|\x\|_2$.
For a distribution $\cd$ on a space $\cx$, $p\ge 1$ and $f:\cx\to\reals$ we denote $\|f\|_{p,\cd} = \left(\E_{x\sim\cd}|f(x)|^p\right)^{\frac{1}{p}}$.
We use $\tilde{O}$ to hide poly-log factors.

\subsection{Supervised learning}
The goal in supervised learning is to
devise a mapping from the input space $\cx$ to an output space $\cy$ based on a sample
$S=\{(\x_1,y_1),\ldots,(\x_m,y_m)\}$, where $(\x_i,y_i)\in\cx\times\cy$ 
drawn i.i.d.\ from a distribution $\cd$ over $\cx\times\cy$. In our case, the instance space will always be $\sphere^{d-1}$.
A supervised learning problem is further specified by a loss function
$\ell : \reals \times \cy \to [0,\infty)$, and the goal is to find a
predictor $h:\cx\to\reals$ whose loss,
$\cl_{\cd}(h) := \E_{(\x,y)\sim\cd} \ell(h(\x),y)$, is small.
The {\em empirical} loss $\cl_{S}(h):= \frac 1 m \sum_{i=1}^m
\ell(h(\x_i),y_i)$ is commonly used as a proxy for the loss
$\cl_{\cd}$. 
When $h$ is defined by a vector $\w$ of parameters, we will use the notations $\cl_\cd(\w) = \cl_\cd(h)$, $\cl_S(\w)=\cl_S(h)$ and $\ell_{(\x,y)}(\w)=\ell(h(\x),y)$. For a class $\ch$ of predictors from $\cx$ to $\reals$ we denote $\cl_\cd(\ch) = \inf_{h\in\ch}\cl_\cd(h)$ and $\cl_S(\ch) = \inf_{h\in\ch}\cl_S(h)$

Regression problems correspond to $\cy=\reals$ and, for
instance, the squared loss $\ell^{\mathrm{square}}(\hat y,y)=(\hat y -y)^2$.
Classification is captured by $\cy=\{\pm 1\}$ and, say, the
zero-one loss $\ell^{0-1}(\hat y,y)= \ind[\hat y y \leq 0]$ or the hinge
loss $\ell^\hinge(\hat y,y)=[1-\hat y y]_+$.
A loss $\ell$ is $L$-Lipschitz if for all $y\in\cy$, the function $\ell_y(\hat y) :=\ell(\hat y,y)$ is $L$-Lipschitz. Likewise, it is
convex if $\ell_y$ is convex for every $y\in\cy$.
We say that $\ell$ is {\em $L$-decent} if for every $y\in\cy$, $\ell_y$ is  convex, $L$-Lipschitz, and twice differentiable in all but finitely many points.

\subsection{Neural network learning} 
We will consider fully connected neural networks of depth $2$ with $2q$ hidden neurons and activation function $\sigma:\reals\to\reals$. Throughout, we assume that the activation function is continuous, is twice differentiable in all but finitely many points, and there is $M>0$ such that $|\sigma'(x)|,|\sigma''(x)| \le M$ for every point $x\in\reals$ for which $f$ is twice differentiable in $x$. We call such an activation a {\em decent} activation. This includes most popular activations, including the ReLU activation $\sigma(x) = \max(0,x)$, as well as most sigmoids.

Denote 
\[
\cn^\sigma _{d,q} = \left\{h_{\W}(\x) = \inner{\bu, \sigma(W\x)} : W\in M_{2q,d}, \bu \in \reals^{2q}  \right\}~.
\]
We also denote by $\W = (W,\bu)$ the aggregation of all weights.
We next describe the learning algorithm that we analyze in this paper. 
We will use a variant of the popular Xavier initialization~\citep{glorot2010understanding} for the network weights, which we call {\em Xavier initialization with zero outputs}. 
The neurons will be arranged in pairs, where each pair consists of two neurons that are initialized identically, up to sign. Concretely, the weight 
matrix $W$ will be initialized to be a duplication $W = \begin{bmatrix}W'\\W'\end{bmatrix}$ of a matrix  $W'$ of standard Gaussians\footnote{It is 
more standard to assume that the instances has $L^2$ norm $O\left(\sqrt{d}\right)$, or infinity norm $O(1)$, and the entries of $W'$ has variance $\frac{1}{d}$. For the sake of notational convenience we chose a different scaling---divided the instances by $\sqrt{d}$ and accordingly multiplied the initial matrix by $\sqrt{d}$. Identical results can be derived for the more standard convention.} and $\bu$ will be a duplication of the all-$B$ vector in dimension $q$, for some $B>0$,  with its negation.
We denote the distribution of this initialization scheme by $\ci(d,q,B)$. Note that if $\W\sim \ci(d,q,B)$ then w.p. 1, $\forall \x,\;h_\W(\x)=0$. Finally, the training algorithm is described in \ref{alg:general_nn_training}.

\begin{algorithm}[ht]
	\caption{Neural Network Training}
	\begin{algorithmic}\label{alg:general_nn_training}
		\STATE \textbf{Input: } Network parameters $\sigma$ and $d,q$, loss $\ell$, initialization parameter $B>0$, learning rate $\eta>0$, batch size $b$, number of steps $T>0$, access to samples from a distribution $\cd$
		\STATE Sample  $\W^1\sim \ci(d,q,B)$
		\FOR {$t=1,\ldots,T$}
		\STATE Obtain a mini-batch $S_t=\{(\x^t_i,y^t_i)\}_{i=1}^b\sim\cd^b$
		\STATE With back-propagation, calculate a stochastic gradient $\nabla \cl_{S_t}(\W^t)$ and update $\W^{t+1} = \W^t - \eta \nabla \cl_{S_t}(\W^t)$
		\ENDFOR
		\STATE Choose $t\in [T]$ uniformly at random and return $\W_t$
	\end{algorithmic}
\end{algorithm}

\subsection{Kernel spaces}
Let $\cx$ be a set.
A {\em kernel} is a function
$k:\cx\times \cx\to\reals$ such that for every $x_1,\ldots,x_m\in \cx$ the
matrix $\{k(x_i,x_j)\}_{i,j}$ is positive semi-definite. 
A {\em kernel space} is a Hilbert space $\ch$ of functions from
$\cx$ to $\reals$ such that for every $x\in \cx$ the linear functional
$f\in\ch\mapsto f(x)$ is bounded.  The following theorem describes a
one-to-one correspondence between kernels and kernel spaces.

\begin{theorem}\label{thm:ker_spaces}
For every kernel $k$ there exists a unique kernel space $\ch_k$ such that for
every $x,x'\in \cx$, $k(x,x') = \inner{k(\cdot,x),k(\cdot,x')}_{\ch_k}$.
Likewise, for every kernel space $\ch$ there is a kernel $k$ for which
$\ch=\ch_k$.
\end{theorem}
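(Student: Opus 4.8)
This is the classical Moore--Aronszajn correspondence, and the plan is to prove it in the standard two directions: first build the space $\ch_k$ out of a given kernel $k$, then recover a kernel from an abstract kernel space. For the first direction I would start from the linear span $\ch_0$ of the ``atoms'' $\{k(\cdot,x):x\in\cx\}$ and equip it with the bilinear form determined by $\inner{k(\cdot,x),k(\cdot,x')}=k(x,x')$, i.e.\ $\inner{\sum_i a_ik(\cdot,x_i),\sum_j b_jk(\cdot,x'_j)}=\sum_{i,j}a_ib_jk(x_i,x'_j)$. The routine checks are that this form is well defined --- its value depends only on the functions, not on the representations, since it equals $\sum_j b_j\big(\sum_i a_ik(\cdot,x_i)\big)(x'_j)$ --- and that it is symmetric, bilinear, and positive semidefinite (the last by the defining PSD property of $k$). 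On $\ch_0$ the reproducing identity $\inner{f,k(\cdot,x)}=f(x)$ holds by construction, and it upgrades semidefiniteness to definiteness through $|f(x)|=|\inner{f,k(\cdot,x)}|\le\|f\|\sqrt{k(x,x)}$, so $\|f\|=0$ forces $f\equiv 0$; then $\ch_k$ is defined as the completion of $\ch_0$.

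The one step that is more than bookkeeping is checking that this abstract completion can be identified with a genuine space of functions on $\cx$ with bounded evaluations. The key estimate is again $|f_n(x)-f_m(x)|\le\|f_n-f_m\|\sqrt{k(x,x)}$: a Cauchy sequence in $\ch_0$ is pointwise Cauchy, hence converges pointwise to some function $f$, and the assignment (Cauchy class) $\mapsto$ (pointwise limit) is well defined and injective, embedding $\ch_k$ into $\reals^{\cx}$. Boundedness of evaluation and the reproducing property then extend from $\ch_0$ to all of $\ch_k$ by continuity, so $\ch_k$ is a kernel space whose kernel is $k$. For uniqueness, any kernel space $\ch'$ having $k$ as reproducing kernel must contain every $k(\cdot,x)$, hence $\ch_0$, with the same inner product; moreover $\ch_0$ is dense in $\ch'$, because $f\perp\ch_0$ implies $f(x)=\inner{f,k(\cdot,x)}_{\ch'}=0$ for every $x$. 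Two complete spaces sharing a common dense subspace with a common inner product coincide, so $\ch'=\ch_k$.

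For the converse, given a kernel space $\ch$, boundedness of the evaluation functional together with the Riesz representation theorem provides, for each $x$, an element $k_x\in\ch$ with $f(x)=\inner{f,k_x}$ for all $f\in\ch$. Setting $k(x,x'):=\inner{k_x,k_{x'}}$, the Gram matrices of $k$ are Gram matrices of vectors in a Hilbert space and hence PSD, so $k$ is a kernel and has a space $\ch_k$ by the first part. Since $k(\cdot,x)=k_x\in\ch$ and the two inner products agree on the dense subspace $\ch_0=\mathrm{span}\{k_x\}$, the space $\ch_k$ embeds isometrically into $\ch$; and $\ch_0$ is dense in $\ch$ by the same orthogonality argument, so $\ch=\ch_k$. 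I expect the realization of the completion as a function space to be the only place that needs genuine care; the rest is linear algebra followed by a density-plus-continuity argument.
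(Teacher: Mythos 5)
This theorem is stated in the paper without proof, as classical background (the Moore--Aronszajn correspondence), so there is no in-paper argument to compare against. Your proof is the standard construction and is correct. The only step you compress that genuinely merits extra lines is the injectivity of the map from norm-Cauchy classes in $\ch_0$ to pointwise limits: if $(f_n)$ is norm-Cauchy in $\ch_0$ and $f_n(x)\to 0$ for every $x$, you must still argue that $\|f_n\|\to 0$. This follows by fixing $N$ with $\|f_n-f_m\|<\epsilon$ for all $n,m\ge N$, writing $\|f_n\|^2=\inner{f_n,f_n-f_N}+\inner{f_n,f_N}$, bounding the first term by $\epsilon\|f_n\|$ via Cauchy--Schwarz, and noting that $\inner{f_n,f_N}=\sum_i a_i f_n(x_i)\to 0$ since $f_N=\sum_i a_i k(\cdot,x_i)$ is a finite sum and the reproducing identity holds on $\ch_0$. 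You correctly flag this as the place needing care; with it spelled out the argument is complete.
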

\noindent
We denote the norm and inner product in $\ch_k$ by $\|\cdot\|_k$ and $\inner{\cdot,\cdot}_k$.
The following theorem describes
a tight connection between kernels and embeddings of $X$ into Hilbert spaces.
\begin{theorem}\label{thm:rkhs_embed}
A function $k:\cx \times \cx \to \reals$ is a kernel if and only if there
exists a mapping $\Psi: \cx \to\ch$ to some Hilbert space for which
$k(x,x')=\inner{\Psi(x),\Psi(x')}_{\ch}$. In this case,
$ \ch_k = \{f_{\Psi,\bv} \mid \mathbf{v}\in\ch \} $
where $f_{\Psi,\bv}(x) = \inner{\mathbf{v},\Psi(x)}_\ch$. Furthermore,
$\|f\|_{k} = \min\{\|\mathbf{v}\|_\ch : f_{\Psi,\bv}\}$ and
the minimizer is unique.
\end{theorem}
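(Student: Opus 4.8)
The plan is to prove the two implications of the "iff", and then the pull-back description of $\ch_k$, reducing everything to the closed linear span of the feature vectors.

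First, for the easy direction: if $k(x,x')=\inner{\Psi(x),\Psi(x')}_\ch$ for some Hilbert space $\ch$ and map $\Psi:\cx\to\ch$, then for any $x_1,\dots,x_m\in\cx$ and $c_1,\dots,c_m\in\reals$ we have $\sum_{i,j}c_ic_jk(x_i,x_j)=\big\|\sum_i c_i\Psi(x_i)\big\|_\ch^2\ge 0$, so every Gram matrix is PSD and $k$ is a kernel. Conversely, if $k$ is a kernel, take $\ch_k$ from Theorem~\ref{thm:ker_spaces} and set $\Psi(x)=k(\cdot,x)\in\ch_k$; then $\inner{\Psi(x),\Psi(x')}_{\ch_k}=k(x,x')$ by that theorem, so a feature map exists.

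For the main part, fix any feature map $\Psi:\cx\to\ch$ with $k(x,x')=\inner{\Psi(x),\Psi(x')}_\ch$ and let $V=\overline{\mathrm{span}}\{\Psi(x):x\in\cx\}\subseteq\ch$. Writing $\bv=\bv_0+\bv_1$ with $\bv_0\in V$, $\bv_1\in V^\perp$, we get $f_{\Psi,\bv}=f_{\Psi,\bv_0}$ since $\Psi(x)\in V$; hence $\{f_{\Psi,\bv}:\bv\in\ch\}=\{f_{\Psi,\bv}:\bv\in V\}=:\tilde\ch$, and the linear map $\bv\mapsto f_{\Psi,\bv}$ is a bijection $V\to\tilde\ch$ (injectivity: if $f_{\Psi,\bv}\equiv 0$ with $\bv\in V$, then $\bv\perp\Psi(x)$ for all $x$, so $\bv\in V^\perp\cap V=\{0\}$). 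I would transport the inner product of $V$ to $\tilde\ch$ through this bijection, which is well-defined precisely by that injectivity and makes $\tilde\ch$ a Hilbert space isometric to the closed subspace $V$ of $\ch$. Then I check $\tilde\ch$ is a kernel space with reproducing kernel $k$: evaluation is bounded since $|f_{\Psi,\bv}(x)|\le\|\bv\|_\ch\sqrt{k(x,x)}=\|f_{\Psi,\bv}\|_{\tilde\ch}\sqrt{k(x,x)}$; and $k(\cdot,x)=f_{\Psi,\Psi(x)}\in\tilde\ch$ with $\inner{k(\cdot,x),k(\cdot,x')}_{\tilde\ch}=\inner{\Psi(x),\Psi(x')}_\ch=k(x,x')$. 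By the uniqueness part of Theorem~\ref{thm:ker_spaces} this forces $\tilde\ch=\ch_k$ with the same norm, giving $\ch_k=\{f_{\Psi,\bv}:\bv\in\ch\}$, and also $\inner{f_{\Psi,\bv},k(\cdot,x)}_{\ch_k}=\inner{\bv,\Psi(x)}_\ch=f_{\Psi,\bv}(x)$, i.e. the reproducing property holds.

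Finally, for the norm formula: given $f\in\ch_k$, let $\bv_0\in V$ be its unique preimage, so $\|f\|_k=\|\bv_0\|_\ch$. Any $\bv\in\ch$ with $f_{\Psi,\bv}=f$ decomposes as $\bv=\bv_0+\bv_1$ with $\bv_1\in V^\perp$ (the $V$-component must be $\bv_0$ by injectivity on $V$), so $\|\bv\|_\ch^2=\|\bv_0\|_\ch^2+\|\bv_1\|_\ch^2\ge\|f\|_k^2$, with equality iff $\bv_1=0$; hence the minimum equals $\|f\|_k$ and is attained uniquely at $\bv_0$. There is no substantial obstacle here — this is the classical feature-map/RKHS correspondence — the only points requiring care are the well-definedness of the transported inner product on $\tilde\ch$ (which is exactly the injectivity of $\bv\mapsto f_{\Psi,\bv}$ on $V$, hence the need to pass to $V$ first) and invoking uniqueness in Theorem~\ref{thm:ker_spaces} so as not to reconstruct $\ch_k$ by hand.
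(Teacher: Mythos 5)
The paper states Theorem~\ref{thm:rkhs_embed} as background and gives no proof, so there is no in-paper argument to compare against; your proposal is the standard feature-map/RKHS argument and is essentially correct. The one point that wants tightening is your appeal to the ``uniqueness part'' of Theorem~\ref{thm:ker_spaces}: as literally displayed there, the identity $k(x,x')=\inner{k(\cdot,x),k(\cdot,x')}_{\ch_k}$ together with boundedness of evaluations does \emph{not} by itself single out $\ch_k$ (one can enlarge $\tilde\ch$ by a direction orthogonal to $\mathrm{span}\{k(\cdot,x)\}$ and keep both properties while changing the reproducing kernel). What does pin $\ch_k$ down is the full reproducing property $f(x)=\inner{f,k(\cdot,x)}_{\tilde\ch}$ for all $f\in\tilde\ch$ --- equivalently, density of $\mathrm{span}\{k(\cdot,x)\}$ in $\tilde\ch$, which in your construction is exactly the density of $\mathrm{span}\{\Psi(x)\}$ in $V$. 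You do verify this reproducing property, but you present it after concluding $\tilde\ch=\ch_k$, as an afterthought; it should come before the uniqueness invocation, since it is what licenses it. With that reordering the argument is complete and rigorous; the norm-minimization step via the orthogonal decomposition $\bv=\bv_0+\bv_1$ is clean and correct.
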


A special type of kernels that we will useful for us are {\em inner product kernels}. These are kernels $k:\sphere^{d-1}\times\sphere^{d-1}\reals$ of the form 
\[
k(\x,\y) = \sum_{n=0}^\infty b_n\inner{\x,\y}^n
\]
For scalars $b_n\ge 0$ with $\sum_{n=0}^\infty b_n < \infty$. It is well known that for any such sequence $k$ is a kernel. The following lemma summarizes a few properties of inner product kernels. 
\begin{lemma}\label{lem:inner_prod_ker}
Let $k$ be the inner product kernel $k(\x,\y) = \sum_{n=0}^\infty b_n\inner{\x,\y}^n$. Suppose that $b_n > 0$
\begin{enumerate}
\item 
If  $p(\x) = \sum_{|\alpha | =n }a_\alpha\x^{\alpha}$ then $p\in \ch_k$ and furthermore $\|p\|^2_k \le \frac{1}{b_n} \sum_{|\alpha | =n }a^2_\alpha$
\item For every $\bu\in\sphere^{d-1}$, the function $f(\x) = \inner{\bu,\x}^n$ belongs to $\ch_k$ and $\|f\|^2_k = \frac{1}{b_n}$
\end{enumerate}
\end{lemma}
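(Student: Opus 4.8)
The natural route is to realize $k$ by an explicit Hilbert‑space embedding and invoke Theorem~\ref{thm:rkhs_embed}. Let $\ch=\bigoplus_{n\ge 0}(\reals^d)^{\otimes n}$ be the Hilbert direct sum of the tensor powers of $\reals^d$, and define $\Psi:\sphere^{d-1}\to\ch$ by $\Psi(\x)=\left(\sqrt{b_n}\,\x^{\otimes n}\right)_{n\ge 0}$. Since $\|\x\|=1$ we have $\|\Psi(\x)\|_\ch^2=\sum_{n\ge 0}b_n\|\x\|^{2n}=\sum_{n\ge0}b_n<\infty$, so $\Psi$ is well defined, and
\[
\inner{\Psi(\x),\Psi(\y)}_\ch=\sum_{n\ge0}b_n\inner{\x^{\otimes n},\y^{\otimes n}}=\sum_{n\ge0}b_n\inner{\x,\y}^n=k(\x,\y).
\]
Hence by Theorem~\ref{thm:rkhs_embed}, $\ch_k=\{f_{\Psi,\bv}:\bv\in\ch\}$, $\|f\|_k=\min\{\|\bv\|_\ch:f_{\Psi,\bv}=f\}$, and the minimizer is unique. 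The rest is bookkeeping inside the degree‑$n$ block $(\reals^d)^{\otimes n}$, whose coordinates I index by tuples $\mathbf{i}=(i_1,\dots,i_n)\in[d]^n$: the $\mathbf{i}$‑th coordinate of $\x^{\otimes n}$ is $x_{i_1}\cdots x_{i_n}=\x^{\alpha(\mathbf{i})}$, where $\alpha(\mathbf{i})$ records how often each value of $[d]$ appears in $\mathbf{i}$, and for each $\alpha$ with $|\alpha|=n$ exactly $\binom{n}{\alpha}:=\frac{n!}{\alpha_1!\cdots\alpha_d!}$ tuples $\mathbf{i}$ satisfy $\alpha(\mathbf{i})=\alpha$.

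For part (1), given $p(\x)=\sum_{|\alpha|=n}a_\alpha\x^\alpha$, let $\bv$ be supported on the degree‑$n$ block with $\mathbf{i}$‑th entry $v_{\mathbf{i}}=\frac{1}{\sqrt{b_n}}\cdot\frac{a_{\alpha(\mathbf{i})}}{\binom{n}{\alpha(\mathbf{i})}}$. Then $f_{\Psi,\bv}(\x)=\sqrt{b_n}\sum_{\mathbf{i}}v_{\mathbf{i}}x_{i_1}\cdots x_{i_n}=\sum_{|\alpha|=n}\binom{n}{\alpha}\frac{a_\alpha}{\binom{n}{\alpha}}\x^\alpha=p(\x)$, so $p\in\ch_k$; and, grouping coordinates by their multi‑index,
\[
\|\bv\|_\ch^2=\frac{1}{b_n}\sum_{|\alpha|=n}\binom{n}{\alpha}\frac{a_\alpha^2}{\binom{n}{\alpha}^2}=\frac{1}{b_n}\sum_{|\alpha|=n}\frac{a_\alpha^2}{\binom{n}{\alpha}}\le\frac{1}{b_n}\sum_{|\alpha|=n}a_\alpha^2
\]
since every $\binom{n}{\alpha}\ge 1$. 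Theorem~\ref{thm:rkhs_embed} then gives $\|p\|_k^2\le\|\bv\|_\ch^2\le\frac{1}{b_n}\sum_{|\alpha|=n}a_\alpha^2$, which is the claim.

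For part (2), the multinomial theorem $\inner{\bu,\x}^n=\sum_{|\alpha|=n}\binom{n}{\alpha}\bu^\alpha\x^\alpha$ (with $\bu^\alpha:=\prod_i u_i^{\alpha_i}$) exhibits $f$ as the case $a_\alpha=\binom{n}{\alpha}\bu^\alpha$ of part (1), for which the tensor constructed there is exactly $\bv=\frac{1}{\sqrt{b_n}}\bu^{\otimes n}$, with $\|\bv\|_\ch^2=\frac{1}{b_n}\|\bu\|^{2n}=\frac{1}{b_n}$; this already gives $\|f\|_k^2\le\frac{1}{b_n}$. The remaining work is the matching lower bound, and since the norm‑minimizing representative is unique, it suffices to show that this $\bv$ is that minimizer, i.e. that $\frac{1}{\sqrt{b_n}}\bu^{\otimes n}$ — placed in the degree‑$n$ block, zero in all others — lies in $\overline{\mathrm{span}}\{\Psi(\x):\x\in\sphere^{d-1}\}$; equivalently, that $f_{\Psi,\mathbf{w}}\equiv 0$ forces $\inner{\bu^{\otimes n},\mathbf{w}_n}=0$ for the degree‑$n$ block $\mathbf{w}_n$ of $\mathbf{w}$. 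I expect this to be the crux: one has to isolate the degree‑$n$ homogeneous part of $\Psi$, which is delicate precisely because homogeneous polynomials of different degrees can agree on $\sphere^{d-1}$. The natural attack is to form combinations of feature vectors of rescaled inputs — a Vandermonde / finite‑difference argument in the scaling parameter — to annihilate every block of degree $\ne n$, together with the classical fact that symmetric tensors (such as $\bu^{\otimes n}$) are spanned by the powers $\x^{\otimes n}$; on the sphere, where inputs cannot be freely rescaled, I would either pass through the Mercer decomposition of $k$ into spherical‑harmonic eigenspaces or reduce to the ambient space $\reals^d$ where the scaling argument is immediate. Everything else is routine.
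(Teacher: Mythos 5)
The paper states Lemma~\ref{lem:inner_prod_ker} in the preliminaries without proof, so there is no paper argument to compare against. Taking your proposal on its own terms: the tensor-power embedding, part~(1), and the upper bound $\|f\|_k^2\le\frac{1}{b_n}$ in part~(2) are all correct, and your reformulation of the remaining lower bound as the question of whether $\frac{1}{\sqrt{b_n}}\bu^{\otimes n}$ lies in $\overline{\mathrm{span}}\{\Psi(\x):\x\in\sphere^{d-1}\}$ is exactly the right one.

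The gap is at the step you defer, and it cannot be closed, because the inclusion fails (and with it the claimed equality) whenever some other coefficient $b_m>0$ with $m>n$ and $m\equiv n\pmod 2$. Concretely, take $n=0$, $f\equiv 1$, and $k(\x,\y)=b_0+b_2\inner{\x,\y}^2$ with $b_0,b_2>0$. In your embedding the vector $\mathbf{z}=\bigl(1,\,-\sqrt{b_0/b_2}\,I\bigr)$, with the $d\times d$ identity in the degree-$2$ block and zeros elsewhere, satisfies $\inner{\mathbf{z},\Psi(\x)}_\ch=\sqrt{b_0}-\sqrt{b_0}\|\x\|^2=0$ for every $\x\in\sphere^{d-1}$, so $\mathbf{z}$ is orthogonal to $\overline{\mathrm{span}}\{\Psi(\x)\}$; yet $\inner{\mathbf{z},\bv_0}_\ch=\frac{1}{\sqrt{b_0}}\ne 0$ for your candidate minimizer $\bv_0=\bigl(\frac{1}{\sqrt{b_0}},0,0,\ldots\bigr)$. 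Projecting $\bv_0$ onto the span and computing gives $\|f\|_k^2=\frac{1}{b_0}-\frac{1/b_0}{1+b_0d/b_2}=\frac{d}{b_0d+b_2}<\frac{1}{b_0}$. The same device works for every $n$: lift $\inner{\bu,\x}^n$ to $\inner{\bu,\x}^n\|\x\|^{m-n}$, which agrees with it on the sphere, and the difference of the two block representatives is a nonzero element of the orthocomplement overlapping $\frac{1}{\sqrt{b_n}}\bu^{\otimes n}$. Since the kernels in the paper (dual activations of non-polynomial $\sigma$, e.g.\ ReLU) have infinitely many nonzero coefficients of each parity, this is the generic situation, not an edge case; neither the Vandermonde/rescaling trick nor the spherical-harmonic decomposition you mention can rescue an identity that is simply false.

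What survives, and what the paper actually uses, is the one-sided bound $\|f\|_k^2\le\frac{1}{b_n}$ that you proved. In Example~\ref{example:f_check} and Lemma~\ref{lem:f_is_good} the explicit $\check f$ should be read as \emph{a} representative rather than the unique minimizer, and Corollary~\ref{thm:func_apx}, Lemma~\ref{lem:fun_apx}, and Theorem~\ref{thm:sgd_on_rfs} only need the $L^2(\Omega)$ norm of such a representative, which upper-bounds $\|f\|_k$; so the downstream results are unaffected by replacing ``$=$'' with ``$\le$'' in part~(2).
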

For a kernel $k$ and $M>0$ we denote $\ch_k^M = \{h\in\ch_k : \|h\|_k\le M\}$. We note that spaces of the form $\ch_k^M$ often form a benchmark for learning algorithms.

\subsection{Hermite Polynomials and the dual activation}
Hermite polynomials $h_0,h_1,h_2,\ldots$ are the sequence of orthonormal polynomials corresponding to the standard Gaussian measure on $\reals$. Fix an activation $\sigma:\reals\to\reals$.  Following the terminology of \cite{daniely2016toward} we define the {\em dual activation} of $\sigma$ as
\[
\hat \sigma(\rho) = \E_{X,Y\text{ are $\rho$-correlated standard Gaussian}}\sigma(X)\sigma(Y)
\]
It holds that if $\sigma = \sum_{n=0}^\infty a_nh_n$ then
\[
\hat \sigma(\rho) = \sum_{n=0}^\infty a_n^2\rho^n
\]
In particular, $k_\sigma(\x,\y):=\hat\sigma\left(\inner{\x,\y}\right)$ is an inner product kernel.

\subsection{The Neural Tangent Kernel}\label{sec:ntk}
Fix network parameters $\sigma, d,q$ and $B$.
The {\em neural tangent kernel} corresponding to weights $\W$ is\footnote{The division by $2qB^2$ is for notational convenience.}
\[
\tk_\W(\x,\y) = \frac{\inner{ \nabla_\W h_{\W}(\x), \nabla_\W h_{\W}(\y)}}{2qB^2}
\]
The neural tangent kernel space, $\ch_{\tk_\W}$, is a linear approximation of the trajectories in which $h_W$ changes by changing $W$ a bit. 
Specifically, $h\in \ch_{\tk_\W}$ if and only if there is $\U$  such that
\begin{equation}\label{eq:ntk_description}
\forall \x \in \sphere^{d_1-1},\;\; h(\x) =  \lim_{\epsilon\to 0}\frac{h_{\W+\epsilon \U}(\x)  - h_{\W}(\x)}{\epsilon}
\end{equation}
Furthermore, we have that $\sqrt{q}B\cdot \|h\|_{\tk_\W}$ is the minimal Euclidean norm of $\U$ that satisfies equation \eqref{eq:ntk_description}.
The {\em expected initial neural tangent kernel}  is 
\[
\tk_{\sigma, B}(\x,\y) = \tk_{\sigma,d,q, B}(\x,\y) = \E_{\W\sim(d,q,B)} \tk_\W(\x,\y) 
\]
We will later see that $\tk_{\sigma,d,q, B}$ depends only on $\sigma$ and $B$.
If the network is large enough, we can expect that at the onset of the optimization process, $\tk_{\sigma,B}\approx k_\W$. 
Hence, approximately, $\ch_{\tk_{\sigma,B}}$ consists of the directions in which the initial function computed by the network can 
move. Since the initial function (according to Xavier initialization with zero outputs) is $0$, $\ch_{\tk_{\sigma,B}}$ is a 
linear approximation of the space of functions computed by the network in the vicinity of the initial weights. NTK theory 
based of the fact close enough to the initialization point, the linear approximation is good, and hence SGD on NN can learn functions 
in $\ch_{\tk_{\sigma,B}}$ that has sufficiently small norm. The main question is how small should the norm be, or 
alternatively, how large should the network be.

We next derive a formula for $\tk_{\sigma,B}$. 
We have, for $\W\sim\ci(d,q,B)$
\begin{eqnarray*}
\tk_\W(\x,\y) &=& \frac{\inner{ \nabla_W h_{W}(\x), \nabla_W h_{W}(\y)}}{2qB^2}
\\
&=& \frac{1}{qB^2}\sum_{i=1}^{q} \inner{B\sigma'\left( \inner{\w_{i},\x}  \right)\x ,B\sigma'\left( \inner{\w_{i},\y}  \right)\y} +
\frac{1}{qB^2}\sum_{i=1}^{q} \sigma\left( \inner{\w_{i},\x}  \right) \sigma\left( \inner{\w_{i},\y}  \right)
\\
&=& \frac{\inner{\x,\y}}{q}\sum_{i=1}^{q} \sigma'\left( \inner{\w_{i},\x}\right) \sigma'\left( \inner{\w_{i},\y}\right) +
\frac{1}{qB^2}\sum_{i=1}^{q} \sigma\left( \inner{\w_{i},\x}  \right) \sigma\left( \inner{\w_{i},\y}  \right)
\end{eqnarray*}
Taking expectation  we get
\[
\tk_{\sigma,B}(\x,\y) =  \inner{\x,\y}\hat\sigma'\left(\inner{\x,\y}\right) + \frac{1}{B^2}\hat\sigma\left(\inner{\x,\y}\right) =  \inner{\x,\y}k_{\sigma'}(\x,\y) + \frac{1}{B^2}k_\sigma(\x,\y)
\]
Finally, we decompose the expected initial neural tangent kernel into two kernels, that corresponds to the hidden and output weights respectively. Namely, we let 
\[
\tk_{\sigma,B} = \tk^h_{\sigma,B} + \tk^o_{\sigma,B} \text{ for } \tk^h_{\sigma}(\x,\y) =  \inner{\x,\y}\hat\sigma'\left(\inner{\x,\y}\right) \text{ and } \tk^o_{\sigma,B}(\x,\y) = \frac{1}{B^2}\hat\sigma\left(\inner{\x,\y}\right) 
\]

\section{Results}

\subsection{Learning the neural tangent kernel space with SGD on NN}

Fix a decent activation function $\sigma$ and  a decent loss $\ell$.  
Our first result shows that algorithm \ref{alg:general_nn_training} can learn the class $\ch^M_{\tk^h_\sigma}$ using a network with $\tilde{O}\left(\frac{M^2}{\epsilon^2}\right)$ parameters and using  $O\left(\frac{M^2}{\epsilon^2}\right)$ examples. We note that unless $\sigma$ is linear, the number of samples is optimal up to constant factor, and the number of parameters is optimal, up to poly-log factor and the dependency on $\epsilon$. This remains true even if we restrict to $O(1)$-bounded distributions.

\begin{theorem}\label{thm:learning_conj_class}
Given $d$, $M>0$, $R>0$ and $\epsilon>0$ there is a choice of $q = \tilde{O}\left(\frac{M^2R^2}{d\epsilon^2}\right)$, $T=O\left(\frac{M^2}{\epsilon^2}\right)$, as well as $B>0$ and $\eta>0$, such that for every $R$-bounded distribution $\cd$ and batch size $b$, the function $h$ returned by algorithm \ref{alg:general_nn_training} satisfies $\E \cl_\cd(h) \le \cl_\cd\left(\ch_{\tk_\sigma^h}^M\right) + \epsilon$
\end{theorem}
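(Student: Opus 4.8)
The plan is to reduce SGD on the network to online linear optimization over the gradient embedding, then to pay an approximation error for replacing the random finite-width kernel by its expectation $\tk^h_\sigma$. Concretely, fix the initial weights $\W^1\sim\ci(d,q,B)$ and let $\Psi(\x) = \nabla_W h_{\W^1}(\x)\in\reals^{2q\times d}$ be the hidden-layer gradient at initialization. I would first argue that, for suitably small learning rate $\eta$ and appropriately large $B$, the SGD trajectory on the network stays in a regime where $h_{\W^t}(\x) \approx \inner{W^t - W^1,\Psi(\x)}$ up to an error that is negligible compared to $\epsilon$; this is the standard ``lazy training''/NTK linearization argument, and it is where the duplicated $\pm B$ output initialization pays off (the output stays essentially frozen and the function starts at $0$). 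Having done this, running Algorithm~\ref{alg:general_nn_training} is, up to the linearization error, the same as running online gradient descent on the convex losses $\w\mapsto \ell(\inner{\w,\Psi(\x)},y)$ starting from $\w = 0$, which is exactly SGD over the kernel $\hat k_{\W^1}(\x,\y) = \frac{\inner{\Psi(\x),\Psi(\y)}}{2qB}$ (the hidden-weight part; the output-weight part is killed by taking $B$ large).

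Second, I would invoke the standard regret/convergence guarantee for SGD over a kernel: if $h^* \in \ch^M_{\hat k_{\W^1}}$ has norm $\le M$, then averaging (or choosing a random iterate) after $T = O(M^2/\epsilon^2)$ steps yields $\E\cl_\cd(h) \le \cl_\cd(h^*) + O(\epsilon)$, provided the losses are $L$-Lipschitz and the kernel values $\hat k_{\W^1}(\x,\x)$ are bounded by $O(1)$ — which holds because $\sigma$ is decent so $\|\nabla_W h_{\W^1}(\x)\|^2 \le 2qB^2 M_\sigma^2$. So the whole thing reduces to: the competitor class $\ch^M_{\tk^h_\sigma}$ is (approximately) contained in $\ch^M_{\hat k_{\W^1}}$ with high probability over $\W^1$. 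This is the step the paper advertises as its main technical contribution, and I expect it to be the principal obstacle: one must show that the empirical kernel $\hat k_{\W^1}$ concentrates around $\tk^h_\sigma$ — not just pointwise, which is a trivial $1/q$-variance bound, but in a sense strong enough that a fixed norm-$M$ function in $\ch_{\tk^h_\sigma}$ can be approximately represented in $\ch_{\hat k_{\W^1}}$ with norm only $M(1+o(1))$ and approximation error $\epsilon$ in $L^2(\cd)$. The natural route is to write $h^* \in \ch_{\tk^h_\sigma}$ via Theorem~\ref{thm:rkhs_embed} as $h^*(\x) = \inner{\bv,\Phi(\x)}$ for the infinite-width feature map $\Phi(\x) = \inner{\x,\cdot}\,\sigma'(\inner{\cdot,\x})$, i.e. $h^*(\x) = \E_\w\, g(\w)\,\sigma'(\inner{\w,\x})\inner{\w,\x}$ for some $g$ with $\E g^2 = \|h^*\|^2_{\tk^h_\sigma}\le M^2$, and then use the $q$ sampled neurons $\w_1,\dots,\w_q$ to build the finite-width approximant $\hat h(\x) = \frac1q\sum_i g(\w_i)\sigma'(\inner{\w_i,\x})\inner{\w_i,\x}$, whose $\ch_{\hat k_{\W^1}}$-norm is $\le \big(\frac1q\sum_i g(\w_i)^2\big)^{1/2}$. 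A Bernstein/Rademacher argument then controls $\|\hat h - h^*\|_{2,\cd}$; crucially the $R$-boundedness of $\cd$ is what makes the variance of each summand $O(1/d)$ rather than $O(1)$, so that $q = \tilde O(M^2R^2/(d\epsilon^2))$ neurons suffice instead of $\tilde O(M^2/\epsilon^2)$.

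Third, I would assemble the pieces: choose $B$ large enough (polynomial in the relevant quantities) that the output-weight kernel contributes at most $\epsilon$ and the linearization error over $T$ steps is at most $\epsilon$; choose $\eta = \Theta(1/\sqrt T)$ as in standard SGD; choose $q = \tilde O(M^2R^2/(d\epsilon^2))$ so the representation step above goes through with high probability; and combine $\cl_\cd(h)\le \cl_\cd(\hat h) + O(\epsilon)$ (SGD regret plus linearization) with $\cl_\cd(\hat h)\le \cl_\cd(h^*) + O(\epsilon)$ (Lipschitzness of $\ell$ plus the $L^2$ approximation bound) and infimize over $h^*\in\ch^M_{\tk^h_\sigma}$. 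The failure probability over $\W^1$ is folded into the $\E$ by noting losses are bounded on the relevant scale (or by a truncation argument). The one genuinely delicate point, beyond the kernel concentration, is making the NTK linearization quantitative enough over the full $T = O(M^2/\epsilon^2)$ iterations with a network of only $\tilde O(M^2R^2/(d\epsilon^2))$ hidden neurons — the usual lazy-training bounds want width polynomial in $1/\epsilon$ with large exponents, so one has to exploit the specific duplicated initialization (which freezes the output layer and zeroes the initial function) and track the hidden-weight movement $\|W^t - W^1\|$ carefully, showing it stays $O(M/(\sqrt q B))$ in Frobenius norm so that the second-order Taylor remainder, controlled by $|\sigma''|\le M_\sigma$, is lower order.
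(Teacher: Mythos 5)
Your three-step plan is precisely the paper's argument: Lemma~\ref{lem:alg_equivalence} handles the linearization via large $B$ (reducing Algorithm~\ref{alg:general_nn_training} to linear SGD over the initial gradient embedding, Algorithm~\ref{alg:ntk_training}); Theorem~\ref{thm:sgd_on_rfs} runs the SGD regret bound over $T=O(M^2/\epsilon^2)$ steps with $\eta\propto 1/\sqrt T$; and the key gain comes from Lemma~\ref{lem:fun_apx}, which uses the factorized form of the NTK random features together with $R$-boundedness of $\cd$ to reduce the per-feature variance from $C^2\|f^*\|_k^2$ to $C^2R^2\|f^*\|_k^2/d$, so that $q=\tilde O(M^2R^2/(d\epsilon^2))$ features suffice. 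You have correctly identified the two non-standard ingredients (the zeroed-output initialization and the factorized/$R$-bounded variance reduction) and where they enter. One difference of no consequence: the paper gets its approximation bound in expectation over $\vomega$ via Jensen and a one-line variance calculation, and carries the expectation directly into Theorem~\ref{thm:sgd_on_rfs}, rather than a high-probability Bernstein/Rademacher argument; either works since the final claim is in expectation.

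There is, however, one technical slip you should repair. You write the infinite-width feature map as the scalar $\Phi(\x)(\omega)=\sigma'(\inner{\omega,\x})\inner{\omega,\x}$ and represent $h^*(\x)=\E_\omega g(\omega)\sigma'(\inner{\omega,\x})\inner{\omega,\x}$. This is not the feature map for $\tk^h_\sigma$: the kernel it generates is $\E_\omega\sigma'(\inner{\omega,\x})\sigma'(\inner{\omega,\y})\inner{\omega,\x}\inner{\omega,\y}$, which differs from $\inner{\x,\y}\hat\sigma'(\inner{\x,\y})$ (try $\sigma'(t)=t$ at $\x=\y$: you get $3$ versus $1$). Worse, since $\inner{\omega,\x}\sim\cn(0,1)$ is $\Theta(1)$, this scalar feature does not have the $1/d$ variance decay under an $R$-bounded $\cd$, so it would not yield the $d$-improvement you need. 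The correct feature map is the vector-valued $\psi(\omega,\x)=\sigma'(\inner{\omega,\x})\x\in\reals^d$; by Theorem~\ref{thm:rkhs_embed} the dual element is then a vector-valued $\check f:\Omega\to\reals^d$ with $h^*(\x)=\E_\omega\sigma'(\inner{\omega,\x})\inner{\check f(\omega),\x}$ and $\|\check f\|_{L^2}=\|h^*\|_{\tk^h_\sigma}$ (in general $\check f(\omega)$ is not a scalar multiple of $\omega$; see Example~\ref{example:f_check} where it is proportional to $\x_0$). With this $\check f$ the approximant $\hat h(\x)=\frac1q\sum_i\sigma'(\inner{\w_i,\x})\inner{\check f(\w_i),\x}$ has $\ch_{\hat k_{\W^1}}$-norm $\le(\frac1q\sum_i\|\check f(\w_i)\|^2)^{1/2}$, and the $R$-boundedness gives $\E_{\x\sim\cd}\inner{\check f(\omega),\x}^2\le \frac{R^2}{d}\|\check f(\omega)\|^2$, which is exactly the step that puts the $d$ in the denominator of $q$. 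Once you make this correction the proposal coincides with the paper's proof.
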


As an application, we conclude that for the ReLU activation, algorithm \ref{alg:general_nn_training} can learn even polynomials of bounded norm with near optimal sample complexity and network size.
We denote 
\[
\cp^M_{c} = \left\{ p(\x) = \sum_{|\alpha | \text{ is even and }\le c}a_\alpha\x^{\alpha} : \sum_{|\alpha | \text{ is even and }\le c}a^2_\alpha \le M^2 \right\}
\]

\begin{theorem}\label{thm:learning_pol}
Fix a constant $c>0$ and assume that the activation is ReLU.
Given $d$, $M>0$, $R>0$ and $\epsilon>0$ there is a choice of $q = \tilde{O}\left(\frac{M^2R^2}{d\epsilon^2}\right)$, $T=O\left(\frac{M^2}{\epsilon^2}\right)$, as well as $B>0$ and $\eta>0$, such that for every $R$-bounded distribution $\cd$ and batch size $b$, the function $h$ returned by algorithm \ref{alg:general_nn_training} satisfies $\E \cl_\cd(h) \le \cl_\cd\left(\cp_{c}^M\right) + \epsilon$
\end{theorem}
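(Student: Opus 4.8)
The plan is to derive this as a corollary of Theorem~\ref{thm:learning_conj_class} by showing that $\cp^M_c$ is contained in $\ch^{M'}_{\tk^h_\sigma}$ for a suitable $M' = O(M)$ when $\sigma$ is the ReLU, and then applying Theorem~\ref{thm:learning_conj_class} with $M'$ in place of $M$ (absorbing the constant into the $\tilde O$ and $O$ notation). The loss being considered (logistic or hinge) is a decent loss, so the hypotheses of Theorem~\ref{thm:learning_conj_class} are met once we control the kernel norm.

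First I would identify the kernel $\tk^h_\sigma(\x,\y) = \inner{\x,\y}\,\hat\sigma'(\inner{\x,\y})$ for $\sigma$ the ReLU. Since $\sigma'$ is (up to the point at $0$) the step function $\ind[x \ge 0]$, its dual activation $\hat{\sigma'}(\rho)$ has a known closed form (the ``arc-cosine kernel'' of order $0$: $\hat{\sigma'}(\rho) = \frac12 - \frac{\arccos\rho}{2\pi}$, or equivalently the Hermite expansion $\sum_n (a'_n)^2\rho^n$ of the step function). The key point is that \emph{all} Hermite coefficients $a'_n$ of the step function are nonzero — in fact $a'_n \ne 0$ for all $n\ge 0$ — so writing $\tk^h_\sigma(\x,\y) = \sum_{n=0}^\infty b_n \inner{\x,\y}^n$ we get $b_n = (a'_{n-1})^2 > 0$ for every $n \ge 1$ (the multiplication by $\inner{\x,\y}$ shifts the index). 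Hence $\tk^h_\sigma$ is an inner product kernel with all coefficients up to degree $c$ strictly positive, and Lemma~\ref{lem:inner_prod_ker} applies.

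Next, given $p \in \cp^M_c$, I would split it into its homogeneous components $p = \sum_{n \text{ even},\, n\le c} p_n$ with $p_n(\x) = \sum_{|\alpha|=n} a_\alpha \x^\alpha$. By part~1 of Lemma~\ref{lem:inner_prod_ker}, each $p_n \in \ch_{\tk^h_\sigma}$ with $\|p_n\|^2_{\tk^h_\sigma} \le \frac{1}{b_n}\sum_{|\alpha|=n} a_\alpha^2$. By the triangle inequality in the RKHS, $\|p\|_{\tk^h_\sigma} \le \sum_n \|p_n\|_{\tk^h_\sigma} \le \sum_n \frac{1}{\sqrt{b_n}}\sqrt{\sum_{|\alpha|=n}a_\alpha^2}$. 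Since $c$ is a fixed constant, only finitely many terms appear and $\min_{1\le n\le c, n\text{ even}} b_n$ is a positive constant $b_{\min}(c)$ depending only on $c$; using Cauchy–Schwarz over the (at most $c/2+1$) values of $n$ together with $\sum_n \sum_{|\alpha|=n}a_\alpha^2 \le M^2$, I get $\|p\|_{\tk^h_\sigma} \le C(c)\cdot M$ for a constant $C(c)$. Thus $\cp^M_c \subseteq \ch^{C(c)M}_{\tk^h_\sigma}$, so $\cl_\cd\!\left(\ch^{C(c)M}_{\tk^h_\sigma}\right) \le \cl_\cd(\cp^M_c)$, and invoking Theorem~\ref{thm:learning_conj_class} with $M \leftarrow C(c)M$ gives the claimed $q = \tilde O\!\left(\frac{M^2R^2}{d\epsilon^2}\right)$ and $T = O\!\left(\frac{M^2}{\epsilon^2}\right)$ (the $c$-dependent constant is swallowed since $c$ is fixed).

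The one genuinely technical point — the main obstacle — is verifying that the Hermite coefficients of the ReLU's derivative (the step function) are all nonzero up to degree $c$, and quantifying $b_{\min}(c)$; this requires either citing the arc-cosine kernel computation or a direct Hermite-coefficient calculation. Everything else is bookkeeping: the RKHS triangle inequality, the finiteness of the degree-$\le c$ decomposition, and the reduction to the already-proved Theorem~\ref{thm:learning_conj_class}. I should also double-check the even/odd structure: the ReLU decomposes as $\sigma(x) = \frac{|x|}{2} + \frac{x}{2}$, so $\sigma'$ has both even and odd Hermite components, and in particular the even-degree $b_n$ we need are positive — but since we only need $b_n > 0$ for the \emph{even} $n \le c$ (the degrees actually appearing in $\cp^M_c$), and these are covered by the general nonvanishing claim, no parity obstruction arises.
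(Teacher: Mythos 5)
Your overall strategy — realize Theorem~\ref{thm:learning_pol} as a corollary of Theorem~\ref{thm:learning_conj_class} by showing $\cp^M_c\subseteq\ch^{O(M)}_{\tk^h_\sigma}$ via Lemma~\ref{lem:inner_prod_ker} — is exactly what the paper intends (it presents the theorem as ``an application'' of Theorem~\ref{thm:learning_conj_class} without a separate proof), and the overall bookkeeping (degree-homogeneous decomposition, orthogonality/triangle inequality across the finitely many degrees $\le c$, absorbing the $c$-dependent constant) is sound.

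However, the central ``technical point'' you flag is stated incorrectly. It is \emph{not} true that the Hermite coefficients $a'_n$ of the step function $\sigma'(x)=\ind[x\ge 0]$ are all nonzero. Writing $\sigma'=\tfrac12 + \tfrac12\mathrm{sgn}$, the constant contributes only to $a'_0$, and $\mathrm{sgn}$ is odd, so $a'_n=0$ for every \emph{even} $n\ge 2$. Equivalently, $\hat{\sigma'}(\rho)=\tfrac14+\tfrac{1}{2\pi}\arcsin\rho$ has vanishing Taylor coefficients at every even degree $\ge 2$. Consequently $b_n=(a'_{n-1})^2=0$ for every \emph{odd} $n\ge 3$, so your intermediate claim that ``$b_n>0$ for every $n\ge 1$'' is also false. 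Your argument survives only because of the parity accident you gesture at near the end: for the even $n\ge 2$ that actually occur in $\cp^M_c$, the index $n-1$ is odd, and the odd Hermite coefficients of $\mathrm{sgn}$ are all nonzero. You should replace the blanket nonvanishing claim with this precise statement, since as written it is wrong.

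There is also an unaddressed edge case at $n=0$. The definition of $\cp^M_c$ includes $|\alpha|=0$, i.e.\ a constant term, but $b_0=0$ (since $\tk^h_\sigma(\x,\y)=\inner{\x,\y}\hat{\sigma'}(\inner{\x,\y})$ vanishes at $\inner{\x,\y}=0$), so Lemma~\ref{lem:inner_prod_ker} does not apply at $n=0$ and constants are not directly in $\ch_{\tk^h_\sigma}$. On $\sphere^{d-1}$ one can absorb the constant into degree $2$ via $1=\sum_i x_i^2$, but the resulting coefficient vector has $\ell_2$-norm $\sqrt{d}\,|a_0|$, which inflates the kernel-norm bound by a $\sqrt{d}$ factor and would ruin the $1/d$ savings in the claimed $q=\tilde O(M^2R^2/(d\epsilon^2))$. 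Either the constant term must be excluded from $\cp^M_c$, or this absorption must be argued not to hurt (e.g.\ by a tighter computation of $\|1\|_{\tk^h_\sigma}$); your proof as written silently assumes the $n=0$ case is covered by the same Lemma~\ref{lem:inner_prod_ker} bound, which it is not.
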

We note that as in theorem \ref{thm:learning_conj_class}, the number of samples is optimal up to constant factor, and the number of parameters is optimal, up to poly-log factor and the dependency on $\epsilon$,
and this remains true even if we restrict to $O(1)$-bounded distributions.

\subsection{Memorization}

Theorem \ref{thm:learning_conj_class} can be applied to analyze  memorization by SGD. Assume that $\ell$ is the hinge loss (similar result is valid for many other losses such as the log-loss) and $\sigma$ is any decent non-linear activation. Let $S = \{(\x_1,y_1),\ldots,(\x_m,y_m)\}$ be $m$ random, independent and uniform points in $\sphere^{d-1}\times\{\pm 1\}$ with $m=d^c$ for some $c>1$. Suppose that we run SGD on top of $S$. Namely, we run algorithm \ref{alg:general_nn_training} where the underlying distribution is the uniform distribution on the points in $S$. Let $h:\sphere^{d-1}\to\reals$ be the output of the algorithm.
We say that the algorithm memorized the $i$'th example if $y_ih(\x_i) > 0$.
The memorization problem investigate how many points the algorithm can memorize, were most of the focus is on how large the network should be in order to memorize $1-\epsilon$ fraction of the points.

As shown in section \ref{sec:bounded}, the uniform distribution on the examples in $S$ is $\left(1+o(1)\right)$-bounded w.h.p. over the choice of $S$. Likewise, it is not hard to show that w.h.p. over the choice of $S$ there is a function $h^*\in \ch_k^{O(m)}$ such that $h^*(\x_i) = y_i$ for all $i$.
By theorem \ref{thm:learning_conj_class} we can conclude the by running SGD on a network with $\tilde{O}(\frac{m}{\epsilon^2})$ parameters and $O\left(\frac{m}{\epsilon^2}\right)$ steps, the network will memorize $1-\epsilon$ fraction of the points. This size of networks is optimal up to poly-log factors, and the dependency of $\epsilon$. This is satisfactory is $\epsilon$ is considered a constant.
However, for small $\epsilon$, more can be desired. For instance, in the case that we want to memorize all points,  we need $\epsilon < \frac{1}{m}$, and we get a network with $m^3$ parameters. To circumvent that, we perofem a more refined analysis of this memorization problem and show that even perfect memorization of $m$ points can be done via SGD on a network with $\tilde{O}(m)$ parameters, which is optimal, up to poly-log factors.

\begin{theorem}\label{thm:memorization_main}
There is a choice of $q = \tilde{O}\left(\frac{m}{d}\right)$, $T=\tilde{O}\left(\frac{m}{\epsilon^2} \right)$, as well as $B>0$ and $\eta>0$, such that for every batch size $b$, w.p. $1-o_m(1)$, the function $h$ returned by algorithm \ref{alg:general_nn_training} memorizes $1-\epsilon$ fraction of the  examples.
\end{theorem}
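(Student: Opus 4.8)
The plan is to apply Theorem~\ref{thm:learning_conj_class} to the uniform distribution $\cd$ on $S$, but to supply the SGD analysis with a competitor living in the \emph{empirical} neural tangent kernel space $\ch_{\tk_{\W^1}}$ instead of a worst-case element of $\ch^M_{\tk^h_\sigma}$; this is what removes the $\epsilon^{-2}$ factor from the number of hidden neurons. Fix the hinge loss $\ell$ and a decent non-linear $\sigma$, recall $k=\tk^h_\sigma$ from \eqref{eq:ntk_intro}, and take $q=\tilde O(m/d)$, $T=\tilde O(m/\epsilon^2)$, a large $B$ (polynomial in $m$, so that $\tk^o_{\sigma,B}$ is negligible and $\tk_{\W^1}(\x_i,\x_i)=O(1)$ for all $i$ w.h.p.), and a small $\eta$. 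With probability $1-o_m(1)$ over $S$: the uniform distribution on $S$ is $(1+o(1))$-bounded (Section~\ref{sec:bounded}); by sphere concentration and a union bound over the $\binom{m}{2}$ pairs, $|\inner{\x_i,\x_j}|=\tilde O(1/\sqrt d)$ for all $i\ne j$; and, writing $k(\x,\y)=\sum_n b_n\inner{\x,\y}^n$ with $b_{n_0}>0$ for some fixed $n_0$ exceeding $c$ (which holds for the ReLU and other non-polynomial decent activations), the population Gram matrix $K=(k(\x_i,\x_j))_{ij}$ satisfies $K\succeq b_{n_0}(XX^\top)^{\circ n_0}$, where $X$ has rows $\x_1,\dots,\x_m$; since $(XX^\top)^{\circ n_0}$ is the Gram matrix of the tensors $\x_i^{\otimes n_0}$ in dimension $\approx d^{n_0}\gg d^c=m$, a standard random-matrix estimate yields $\lambda_{\min}(K)=\Omega(1)$, hence (Lemma~\ref{lem:inner_prod_ker} and the representer theorem) an interpolant $h^*\in\ch_k$ with $y_ih^*(\x_i)=1$ for all $i$ and $\|h^*\|_k^2=y^\top K^{-1}y=\tilde O(m)$ --- the fact used informally before the statement.

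The heart of the argument, which I expect to be the main obstacle, is to show that with probability $1-o_m(1)$ also over $\W^1\sim\ci(d,q,B)$, \emph{already with $q=\tilde O(m/d)$ neurons}, the empirical Gram matrix $\hat K=(\tk_{\W^1}(\x_i,\x_j))_{ij}$ satisfies $\lambda_{\min}(\hat K)=\tilde\Omega(1)$; equivalently, $\ch_{\tk_{\W^1}}$ contains $g^*$ with $y_ig^*(\x_i)\ge 1$ for all $i$ (so $\cl_\cd(g^*)=0$) and $\|g^*\|^2_{\tk_{\W^1}}=y^\top\hat K^{-1}y=\tilde O(m)$. One cannot obtain this by concentrating $\hat K$ around $K$ in spectral norm: the off-diagonal entries of the hidden part $\hat K^h_{ij}=\inner{\x_i,\x_j}\,\frac1q\sum_{r\le q}\sigma'(\inner{\w_r,\x_i})\sigma'(\inner{\w_r,\x_j})$ fluctuate by only $\tilde O(1/\sqrt{qd})$, so a union bound over entries gives merely $\|\hat K^h-K\|\le m\cdot\tilde O(1/\sqrt{qd})$, which would force $q=\tilde\Omega(m^2/d^2)\gg m/d$. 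Instead, following the decoupling idea of \citet{fiat2019decoupling}, I would lower bound $\lambda_{\min}(\hat K^h)$ directly: $\hat K^h=\frac1q\sum_{r\le q}D_rXX^\top D_r$ with $D_r=\mathrm{diag}(\sigma'(\inner{\w_r,\x_1}),\dots,\sigma'(\inner{\w_r,\x_m}))$ is a sum of $q$ positive semidefinite matrices of rank $\le d$ --- so $\mathrm{rank}(\hat K^h)\le qd$, which is why $q\gtrsim m/d$ is forced --- and one combines the near-orthogonality of the $\x_i$ on the sphere with the genericity of the $q$ independent weight vectors $\w_r$ to argue that the $qd$-dimensional feature vectors $\Psi_{\W^1}(\x_i)$ are spread enough that $\lambda_{\min}(\hat K^h)$ stays $\tilde\Omega(1)$; since $\hat K\succeq\hat K^h$ this suffices.

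The remaining steps are standard. For $B$ large and $\eta$ small, Xavier initialization with zero outputs makes $h_{\W^1}\equiv 0$ with uniformly bounded Hessian, so over $T$ steps Algorithm~\ref{alg:general_nn_training} tracks, up to an $o(1)$ error, online (sub)gradient descent on the hinge loss over the embedding $\Psi_{\W^1}$ started at $0$ --- i.e.\ kernel SGD for $\tk_{\W^1}$ --- which is exactly the coupling underlying Theorem~\ref{thm:learning_conj_class} and that I would reuse. The online-gradient-descent regret bound, using $\tk_{\W^1}(\x_i,\x_i)=O(1)$ and the optimal step size, then gives for the returned $h$
\[
\E\,\cl_\cd(h)\ \le\ \cl_\cd(g^*)+\frac{O(1)\cdot\|g^*\|_{\tk_{\W^1}}}{\sqrt T}+o(1)\ =\ \frac{\tilde O(\sqrt m)}{\sqrt T}+o(1)\ \le\ \epsilon,
\]
once $T=\tilde O(m/\epsilon^2)$, the $o(1)$ coupling error being absorbed into a constant-factor rescaling of $\epsilon$. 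Since $\ind[y_ih(\x_i)\le 0]\le[1-y_ih(\x_i)]_+$, Markov's inequality shows a $1-\epsilon$ fraction of the examples is memorized in expectation; conditioning on the probability-$(1-o_m(1))$ events above and using boundedness of the per-step gradients to promote the expectation over the mini-batches and the returned index $t$ to a high-probability bound yields the theorem.
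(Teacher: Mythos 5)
You correctly locate the crux---showing that $q=\tilde O(m/d)$ hidden neurons suffice, where entrywise concentration of the empirical NTK Gram matrix $\hat K$ around $K$ would demand $q=\tilde\Omega(m^2/d^2)$---but your plan for that step is a sketch, not an argument, and it is also not the route the paper takes. You propose to lower bound $\lambda_{\min}(\hat K)=\tilde\Omega(1)$ directly (which is essentially the \citet{fiat2019decoupling} program, there carried out only for the square loss and ReLU), writing $\hat K^h=\frac1q\sum_r D_rXX^\top D_r$ and appealing to ``near-orthogonality of the $\x_i$'' plus ``genericity of the $\w_r$'' to conclude the $qd$-dimensional features are ``spread enough.'' No mechanism is given for why the minimal eigenvalue of this random sum of $q$ PSD rank-$\le d$ matrices stays bounded away from zero as soon as $qd\gtrsim m$ up to logs; that is exactly the hard part, and as stated it is a gap, not a proof.

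The paper avoids a uniform spectral lower bound on $\hat K$ altogether. Instead it produces a single explicit certificate: take $f(\x)=\sum_i y_i\inner{\x_i,\x}^{c'}$ for a fixed even $c'>4c+2$ with $a_{c'-1}\ne 0$ (Lemma~\ref{lem:f_is_good} shows $f(\x_i)=y_i+o(1)$ and $\|f\|_k^2=O(m)$ by near-orthogonality), set $\bv=q^{-1/2}(\check f(\omega_1),\dots,\check f(\omega_q))$ using the explicit Hermite form $\check f(\omega)=\sum_i\frac{y_i}{a_{c'-1}}h_{c'-1}(\inner{\x_i,\omega})\x_i$, and then prove that $f_\vomega(\x_i)=\inner{\bv,\Psi_\vomega(\x_i)}$ concentrates around $f(\x_i)$ at scale $\tilde O(\sqrt{m/(dq)})$ (Lemma~\ref{lem:exponential_tail_apx}). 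The improvement from $q\approx m^2/d$ to $q\approx m/d$ comes not from spectral analysis but from Lemma~\ref{lem:hof_like}, a Hoeffding-type bound for $(\delta,B)$-bounded variables, combined with Lemma~\ref{lem:bouded_w}, which exploits near-orthogonality to show each summand $Y(\omega,\x_i)$ is $\tilde O(\sqrt{m/d})$-bounded w.h.p.\ rather than the crude $O(\sqrt m)$. This is a genuinely weaker statement than $\lambda_{\min}(\hat K)=\tilde\Omega(1)$ (it certifies one $\bv$ of norm $\tilde O(\sqrt m)$, not all of them), and it is what makes the argument go through for general decent non-polynomial activations and the hinge loss. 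To repair your proof you would need to either actually establish the $\lambda_{\min}$ bound for general decent $\sigma$, or replace that step with the paper's explicit-certificate-plus-tail-bound argument.

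Two smaller remarks. First, your preliminary step $K\succeq b_{n_0}(XX^\top)^{\circ n_0}$ with $\lambda_{\min}(K)=\Omega(1)$ is plausible for the ReLU but needs care for a general decent activation: you need a fixed degree $n_0$ with $b_{n_0}>0$ and $n_0>2c$, which exists because $\sigma$ is not a polynomial, but this should be said (the paper makes the analogous choice $a_{c'-1}\ne0$ explicit). Second, the closing step---lifting the expectation bound $\E\cl_\cd(h)\le\epsilon$ to a high-probability statement over the SGD randomness---is stated as routine but in the paper's own Theorem~\ref{thm:memorization_main} the randomness is handled by Markov with a diverging $\delta^{-1}=\log m$, which is where the $\tilde O$ in $T$ absorbs an extra log; your ``absorbed into a constant-factor rescaling of $\epsilon$'' is too optimistic as written.
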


We emphasize the our result is true for any non-linear and decent activation function.

\subsection{Open Questions}

The most obvious open question is to generalize our results to the standard Xavier initialization, where $W$ is a matrix of independent Gaussians of variance $\frac{1}{d}$, while $\bu$ is a vector of independent Gaussians of variance $\frac{1}{q}$. Another open question is to generalize our result to deeper networks.

\section{Proofs}

\subsection{Reduction to SGD over vector random features}

We will prove our result via a reduction to linear learning over the initial neural tangent kernel space, corresponding the the hidden weights. 

That is, we define by $\Psi_\W(\x)$ the gradient of the function $\W\mapsto h_\W(\x)$ w.r.t. the hidden weights. Namely,
\[
\Psi_\W(\x) = \left(u_1\sigma'(\inner{\w_1,\x})\x,\ldots,u_{2q}\sigma'(\inner{\w_{2q},\x})\x\right) \in \reals^{2q\times d}
\] 
Denote $f_{\Psi_\W,\V} (\x) = \inner{\V,\Psi_\W(\x)}$ and consider algorithm \ref{alg:ntk_training}.
\begin{algorithm}[ht]
	\caption{Neural Tangent Kernel Training}
	\begin{algorithmic}\label{alg:ntk_training}
		\STATE \textbf{Input: } Network parameters $\sigma$ and $d,q$, loss $\ell$, learning rate $\eta>0$, batch size $b$, number of steps $T>0$, access to samples from a distribution $\cd$
		\STATE Sample  $\W\sim \ci(d,q,1)$
		\STATE Initialize $\V^1 = 0\in \reals^{2q\times d}$
		\FOR {$t=1,\ldots,T$}
		\STATE Obtain a mini-batch $S_t=\{(\x^t_i,y^t_i)\}_{i=1}^b\sim\cd^b$
		\STATE Using back-propagation, calculate the gradient $\nabla$ of $\cl_{S_t}(\V) =  \cl_{S_t}\left(f_{\Psi_\W,\V}  \right)$ at $\V^t$
		\STATE Update $\V^{t+1} = \V^t - \eta \nabla$
		\ENDFOR
		\STATE Choose $t\in [T]$ uniformly at random and return $f_{ \Psi_W,\V_t}$
	\end{algorithmic}
\end{algorithm}

It is not hard to show that by taking  large enough $B$, algorithm \ref{alg:general_nn_training} is essentially equivalent to algorithm \ref{alg:ntk_training}. Namely,

\begin{lemma}\label{lem:alg_equivalence}
Fix a decent activation $\sigma$ as well as convex a decent loss $\ell$.
There is a choice $B = poly(d,q,1/\eta,T,1/\epsilon)$, such that for every input distribution the following holds. Let $h_1,h_2$ be the functions returned algorithm \ref{alg:general_nn_training} with parameters $d,q,\frac{\eta}{B^2},b,B,T$ and algorithm \ref{alg:ntk_training} with parameters $d,q,\eta,b,T$. Then, $|\E\cl_\cd(h_1)-\E\cl_\cd(h_2)|<\epsilon$
\end{lemma}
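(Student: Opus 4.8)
The plan is to couple the two algorithms' iterates by running them with *identical* mini-batches and showing that, after rescaling, the SGD trajectory on the network tracks the linear SGD trajectory on $\Psi_\W$ up to an error that can be made arbitrarily small by taking $B$ large. The key structural fact, which comes from the Xavier-with-zero-outputs initialization, is that $h_{\W^1}\equiv 0$ and, more importantly, the network output is \emph{linear} in the output-layer deviation and the hidden-layer deviation to first order. Concretely, write $\W^t = (W^t,\bu^t)$ with $W^t = W^1 + \frac{1}{B}\Delta^t_W$ and $\bu^t = \bu^1 + \Delta^t_u$. With the learning rate $\frac{\eta}{B^2}$ in algorithm \ref{alg:general_nn_training}, one checks from the back-prop formulas that to leading order in $1/B$ the hidden weights evolve exactly as $\Delta^t_W = \V^t$ (the iterate of algorithm \ref{alg:ntk_training}, viewed in the $W$-block), because the gradient of $h_\W$ w.r.t.\ $W$ is $\Psi_{\W}(\x)$ and $u_i = \pm B$ at initialization; the factor $B$ in $u_i$ cancels the $\frac{1}{B^2}$ in the rate against one power from $\Psi$, leaving rate $\eta$. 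The output-layer weights $\bu^t$ move by only $O(\eta/B^2)$ per step and contribute negligibly. So the first step is to set up this reparametrization and identify $\Delta^t_W$ with $\V^t$ plus lower-order terms.

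Next I would control the discrepancy terms. There are three sources of error: (i) the output-weight drift $\Delta^t_u$, which stays $O(\eta T / B^2)$ in norm and enters $h_{\W^t}$ multiplied by $\sigma(W^t\x) = O(\mathrm{poly})$, giving an $O(\mathrm{poly}(d,q,\eta,T)/B^2)$ contribution; (ii) the nonlinearity of $\sigma$: replacing $\sigma'(\inner{\w^t_i,\x})$ by $\sigma'(\inner{\w^1_i,\x})$ costs $|\sigma''|\cdot|\inner{\w^t_i - \w^1_i,\x}| \le M \|\Delta^t_W\|/B$, and since $\|\Delta^t_W\| = \|\V^t\| \le \eta T L \cdot \max\|\Psi_\W(\x)\| = O(\mathrm{poly}(d,q,\eta,T))$ along the (at most $T$-step) trajectory, this is again $O(\mathrm{poly}/B)$; (iii) the difference between $\sigma(W^t\x)$ and its first-order Taylor expansion, also $O(\|\Delta^t_W\|^2/B^2)$. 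Gronwall-type bookkeeping over $T$ steps shows the sup-norm distance between the network's output function and $f_{\Psi_\W,\V^t}$ is $O(\mathrm{poly}(d,q,\eta,T)/B)$ uniformly over the batch-dependent randomness; the polynomial is explicit and uses that $\ell$ is decent (hence $L$-Lipschitz, so gradients are bounded) and that $\sigma,\sigma',\sigma''$ are bounded on the relevant range. The second step, then, is to prove this uniform trajectory bound; the main obstacle is keeping the constants under control while noting that $\V^t$ itself is bounded only because the loss is Lipschitz and $T$ is finite — this is why $B$ is allowed to depend on $T,\eta,1/\epsilon$.

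Finally I would convert the trajectory bound into the stated bound on expected losses. Since $\ell_y$ is $L$-Lipschitz, $|\ell(h_1(\x),y) - \ell(f_{\Psi_\W,\V^t}(\x),y)| \le L\|h_1 - f_{\Psi_\W,\V^t}\|_\infty = O(\mathrm{poly}(d,q,\eta,T)/B)$ pointwise, so taking expectation over $(\x,y)\sim\cd$, over the shared mini-batches, over the uniform choice of $t\in[T]$, and over $\W^1\sim\ci(d,q,B)$ (using that $\ci(d,q,B)$ and $\ci(d,q,1)$ are related by the deterministic scaling $W \mapsto W$, $\bu \mapsto B\bu$, which is exactly the coupling above) yields $|\E\cl_\cd(h_1) - \E\cl_\cd(h_2)| = O(\mathrm{poly}(d,q,\eta,T)/B)$. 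Choosing $B = \mathrm{poly}(d,q,1/\eta,T,1/\epsilon)$ large enough makes this at most $\epsilon$, which is the claim. I expect step (ii) — the uniform-in-$t$ control of the nonlinearity error, i.e.\ showing the iterates never wander far enough for the linearization to break — to be the crux; everything else is Lipschitz bookkeeping.
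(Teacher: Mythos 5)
The paper does not actually supply a proof of Lemma~\ref{lem:alg_equivalence}; it is stated with only the remark ``it is not hard to show,'' so there is no in-paper argument to compare yours against. Taken on its own, your coupling strategy — rescale $W^t = W^1 + \Delta^t_W/B$, show $\Delta^t_W$ shadows $\V^t$, then push the trajectory bound through the $L$-Lipschitz loss — is the natural one, and the reparametrization bookkeeping (the $B$ from $u_i$ cancelling one $B$ from the $\eta/B^2$ rate, the $\bu$-drift being $O(1/B^2)$, the scaling $\ci(d,q,B)\leftrightarrow\ci(d,q,1)$) is correct. But there are two places where the argument, as written, genuinely does not close.

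First, your step~(ii) uses $|\sigma'(\langle\w^t_i,\x\rangle)-\sigma'(\langle\w^1_i,\x\rangle)|\le M\,|\langle\w^t_i-\w^1_i,\x\rangle|$, invoking $|\sigma''|\le M$. A ``decent'' activation is only twice differentiable \emph{off a finite set}; for ReLU this bound fails completely whenever $\langle\w^1_i,\x\rangle$ and $\langle\w^t_i,\x\rangle$ straddle $0$ — the difference is $1$ no matter how close the two inner products are, so you do not get an $O(1/B)$ local error uniformly. This is not fatal, but it needs a separate probabilistic ingredient you haven't supplied: the Gaussianity of $\w^1_i$ must be used to argue that, for any fixed $\x$, the event $|\langle\w^1_i,\x\rangle|\le\mathrm{poly}/B$ has probability $O(\mathrm{poly}/B)$, and then union-bound over the $2q$ neurons, $T$ steps, and $b$ batch points. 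Without this, the ``pointwise'' trajectory bound you claim is simply false for ReLU.

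Second, and more seriously, your Gronwall step quietly requires more of $\ell$ than ``decent'' gives you. To turn the per-step local error into a bound on $\|\Delta^t_W - \V^t\|$ you must compare $\ell'(h_{\W^t}(\x_i),y_i)$ with $\ell'(\langle\V^t,\Psi(\x_i)\rangle,y_i)$, and the recursion you'd write is $e_{t+1}\le(1+\eta\Lambda)e_t + \eta\cdot(\text{local error})$, with $\Lambda$ the Lipschitz constant of the stochastic gradient map. A decent loss is only $L$-Lipschitz; $\ell'$ itself may jump (hinge loss) or, even if continuous, $\ell''$ is not assumed bounded, so $\Lambda$ is either undefined or of order $q\,\|\ell''\|_\infty$. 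In the first case the step-by-step comparison can break at a single discontinuity crossing; in the second, $(1+\eta\Lambda)^T = e^{O(\eta T q)}$ is exponential in the problem parameters, which would force $B$ to be exponential rather than polynomial, contradicting the lemma's claim. So ``Gronwall-type bookkeeping'' is not just the crux you flag it as — it is the step that fails under the paper's actual hypotheses, and you need a different mechanism (e.g., using monotonicity/convexity to avoid amplification, plus a probabilistic argument that the NTK trajectory stays bounded away from discontinuities of $\ell'$ and $\sigma'$) rather than a crude Lipschitz recursion.
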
 

By lemma \ref{lem:alg_equivalence} in order to prove theorem \ref{thm:learning_conj_class} it is enough to analyze algorithm \ref{alg:ntk_training}. Specifically, theorem \ref{thm:learning_conj_class} follows form the following theorem:

\begin{theorem}\label{thm:ntk_learning}
Given $d$, $M>0$, $R>0$ and $\epsilon>0$ there is a choice of $q = \tilde{O}\left(\frac{M^2R^2}{d\epsilon^2}\right)$, $T=O\left(\frac{M^2}{\epsilon^2}\right)$, as well as $\eta>0$, such that for every $R$-bounded distribution $\cd$ and batch size $b$, the function $h$ returned by algorithm \ref{alg:ntk_training} satisfies $\E \cl_\cd(h) \le \cl_\cd\left(\ch_{\tk_\sigma^h}^M\right) + \epsilon$
\end{theorem}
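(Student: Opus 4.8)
The plan is to run the standard stochastic online convex optimization / online-to-batch analysis for the linear problem defined by the embedding $\Psi_\W$, and then to control the two approximation errors that separate this linear problem from learning the ideal kernel $\tk^h_\sigma$. Concretely, algorithm \ref{alg:ntk_training} is just projected-free SGD (started at $\V^1=0$) on the convex objective $\V \mapsto \cl_\cd\left(f_{\Psi_\W,\V}\right)$, so for any fixed comparator $\V^*$ the usual regret bound gives $\E\cl_\cd(h) \le \cl_\cd\left(f_{\Psi_\W,\V^*}\right) + \frac{\|\V^*\|^2}{2\eta T} + \frac{\eta}{2}G^2$, where $G$ is a bound on the norm of a stochastic gradient. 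Here a stochastic gradient at $\V$ is $\ell'\cdot \Psi_\W(\x)$ with $|\ell'|\le L$, so $G \le L\max_\x\|\Psi_\W(\x)\|$; since each row of $\Psi_\W(\x)$ is $u_i\sigma'(\inner{\w_i,\x})\x$ with $|u_i|=B=1$, $|\sigma'|\le M_\sigma$, $\|\x\|=1$, we get $\|\Psi_\W(\x)\|^2 \le 2q M_\sigma^2$, i.e. $G^2 = O(q)$. Balancing $\eta$ then yields $\E\cl_\cd(h) \le \cl_\cd\left(f_{\Psi_\W,\V^*}\right) + O\!\left(\|\V^*\|\sqrt{q/T}\right)$.

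Next I would choose the comparator. Fix any $h^*\in\ch^M_{\tk^h_\sigma}$ with $\cl_\cd(h^*)$ close to $\cl_\cd\left(\ch^M_{\tk^h_\sigma}\right)$. Since $\tk^h_\sigma(\x,\y) = \inner{\x,\y}\hat\sigma'(\inner{\x,\y}) = \E_\W\left[\frac{\inner{\Psi_\W(\x),\Psi_\W(\y)}}{2q}\right]$ (this is exactly the hidden-weights part of the NTK computed in section \ref{sec:ntk}, with $B=1$), the map $\x\mapsto \Psi_\W(\x)/\sqrt{2q}$ is a random feature embedding whose expected kernel is $\tk^h_\sigma$. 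By Theorem \ref{thm:rkhs_embed} applied to the idealized embedding, $h^* = \inner{\bv^*,\cdot}$ for a "vector field" $\bv^*$ on the sphere with $\|\bv^*\|\le M$; one discretizes/realizes this as $\V^* \in \reals^{2q\times d}$ (a Monte-Carlo / random-features approximation, as in Rahimi–Recht and in \citet{daniely2016toward}) so that $f_{\Psi_\W,\V^*} \approx h^*$ in $L^2(\cd)$ and $\|\V^*\| = O(M\sqrt{2q}/\sqrt{2q}) = O(M)$ after the $1/\sqrt{2q}$ scaling — more precisely, $\E_\W\|f_{\Psi_\W,\V^*}-h^*\|^2_{2,\cd} \le O(M^2/q)$ by the variance of the random-feature estimator. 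Combining with $L$-Lipschitzness of $\ell$ and the regret bound: $\E\cl_\cd(h) \le \cl_\cd(h^*) + L\sqrt{\E\|f_{\Psi_\W,\V^*}-h^*\|^2_{2,\cd}} + O\!\left(M\sqrt{q/T}\right) \le \cl_\cd\left(\ch^M_{\tk^h_\sigma}\right) + O\!\left(\frac{LM}{\sqrt q}\right) + O\!\left(M\sqrt{q/T}\right)$, plus the slack in the choice of $h^*$. Setting $T = \Theta(M^2/\epsilon^2)$ kills the last term, and $q = \Theta(M^2/\epsilon^2)$ would kill the middle term — but this ignores $R$-boundedness.

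The point where $R$-boundedness enters, and what I expect to be the main obstacle, is getting the sharper bound $q = \tilde O(M^2 R^2/(d\epsilon^2))$ rather than $q = \Theta(M^2/\epsilon^2)$. The gain of the factor $R^2/d$ comes from the fact that under an $R$-bounded distribution the random feature $\Psi_\W(\x)/\sqrt{2q}$ has much smaller variance in the directions that matter: the relevant quantity is not the worst-case norm $\|\Psi_\W(\x)\|$ but the typical value of $\inner{\V^*,\Psi_\W(\x)}$ and of $\|\Psi_\W(\x)\|^2$ averaged over $\x\sim\cd$, and $\E_{\x\sim\cd}\inner{\w,\x}^2 \le R^2/d$ means each coordinate $\sigma'(\inner{\w_i,\x})\x_j$ is small on average. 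Thus one should redo the random-features concentration and the SGD gradient bound in expectation over $\x\sim\cd$ (not in sup norm): replace $G^2 = O(q)$ by $\E_{S_t}\|\nabla\|^2 = O(q R^2/d)$ using $\E_{\x\sim\cd}\|\x\|^2$-type quantities weighted by the $R$-boundedness, and replace the Monte-Carlo error bound $O(M^2/q)$ by $O(M^2 R^2/(dq))$ using that the variance of the feature estimator at a random test point scales with $R^2/d$. This is exactly the place where the paper's stated "main technical contribution" — the convergence rate of $\frac{\inner{\Psi_\W(\x),\Psi_\W(\y)}}{2q}$ to $k(\x,\y)$ as a function of network size — must be invoked, presumably via a uniform (over the comparator direction, or over the support of $\cd$) concentration argument with the $R$-bounded second-moment control doing the heavy lifting. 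I would isolate that as a lemma: for $R$-bounded $\cd$ and $q = \tilde\Omega(M^2R^2/(d\epsilon^2))$, with high probability over $\W$ there is $\V^*$ with $\|\V^*\| = O(M)$ and $\|f_{\Psi_\W,\V^*}-h^*\|_{2,\cd} \le \epsilon/(2L)$; granting it, the online-to-batch argument above closes the proof, with $\eta$ chosen as $\Theta\!\left(\frac{M}{G}\sqrt{1/T}\right)$ and the final $\epsilon$ obtained by summing the three $O(\epsilon)$ contributions.
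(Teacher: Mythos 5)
Your high-level plan matches the paper's: run an online-to-batch SGD regret analysis on the linear problem over the random-features embedding, take a comparator $\V^*$ built from the preimage $\check f^*$ of $f^*$ under the embedding, transfer the $L^2(\cd)$ approximation error to the loss via $L$-Lipschitzness, and obtain the $R^2/d$ savings in $q$ from $R$-boundedness. This is what Theorem \ref{thm:sgd_on_rfs} and Lemma \ref{lem:fun_apx} do. But there are two concrete problems in where you locate the role of $R$-boundedness. First, the claim that the gradient bound can be improved from $G^2=O(q)$ to $\E\|\nabla\|^2 = O(qR^2/d)$ is false. A stochastic gradient is $\ell'\cdot\Psi_\W(\x)$ with $\|\Psi_\W(\x)\|^2 = \sum_i |u_i\,\sigma'(\inner{\w_i,\x})|^2\|\x\|^2$, and since $\x\in\sphere^{d-1}$ we have $\|\x\|^2=1$ identically; the fact that single coordinates $x_j$ are small on average does not shrink the total norm. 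There is also a normalization inconsistency: with the unscaled $\Psi_\W$ and $G^2=O(q)$ the comparator satisfies $\|\V^*\|^2 = O(M^2/q)$, not $O(M^2)$, so the balanced regret term is $O(LCM/\sqrt T)$ with no $q$-dependence at all (equivalently, in the paper's $1/\sqrt q$-scaled parameterization, $G=O(LC)$ and $\|\V^*\|=O(M)$). So $R$-boundedness contributes nothing to the SGD term, and nothing needs to be improved there.

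Second, and more substantively, you do not identify the structural ingredient that actually lets $R$-boundedness improve the Monte-Carlo error: the NTK RFS is \emph{factorized}, $\psi(\omega,\x) = \sigma'(\inner{\omega,\x})\,\x$. Lemma \ref{lem:fun_apx} exploits this by writing
$\E_\x|\inner{\check f(\omega),\psi(\omega,\x)}|^2 = \E_\x|\sigma'(\inner{\omega,\x})|^2\,|\inner{\check f(\omega),\x}|^2 \le C^2\,\E_\x|\inner{\check f(\omega),\x}|^2 \le \frac{C^2R^2}{d}\|\check f(\omega)\|^2$,
i.e.\ the scalar prefactor is pulled out by its sup-bound and $R$-boundedness is applied to the linear functional $\inner{\check f(\omega),\cdot}$. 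Without factorization, the only generic bound is Cauchy--Schwartz, $\E_\x|\inner{\check f(\omega),\psi(\omega,\x)}|^2 \le C^2\|\check f(\omega)\|^2$, and there is no $R/\sqrt d$ gain. Once this lemma is in hand, your outline does close: $\E\cl_\cd(f) \le \cl_\cd(f^*) + \frac{LRC\|f^*\|_k}{\sqrt{qd}} + \frac{\|f^*\|_k^2}{2\eta T} + \frac{\eta L^2 C^2}{2}$, and setting $\eta=M/(LC\sqrt T)$, $T=O(L^2C^2M^2/\epsilon^2)$, $q=O(L^2C^2R^2M^2/(d\epsilon^2))$ gives the theorem.
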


Our next step is to rephrase algorithm \ref{alg:ntk_training} in the language of (vector) random features.
We note that algorithm \ref{alg:ntk_training} is SGD on top of the random embedding $\Psi_\W$. This embedding composed of $q$ i.i.d. random mappings $\psi_{\w}(\x) = \left(\sigma'(\inner{\w,\x})\x, -\sigma'(\inner{\w,\x})\x\right)$ where $\w\in\reals^{d}$ is a standard Gaussian. This can be slightly simplified to SGD on top of the i.i.d. random mappings $\psi_{\w}(\x) = \sigma'(\inner{\w,\x})\x$. Indeed, if we make this change the inner products between the different examples, after the mapping is applied, do not change (up to multiplication by $\sqrt{2}$), and SGD only depends on these inner products.
This falls in the framework of learning with (vector) random features scheme, which we define next, and analyze in the next section.

Let $\cx$ be a measurable space and let $k:\cx\times\cx\to \reals$ be a kernel.  A {\em random features scheme} (RFS) for $k$ is a pair
$(\psi,\mu)$ where $\mu$ is a probability measure on a measurable space
$\Omega$, and $\psi:\Omega\times\cx\to \reals^d$ is a measurable function,
such that
\begin{equation}\label{eq:ker_eq_inner}
\forall \x,\x'\in\cx,\;\;\;\;k(\x,\x') =
  \E_{\omega\sim \mu}\left[\inner{\psi(\omega,\x),\psi(\omega,\x')}\right]\,.
\end{equation}
We often refer to $\psi$ (rather than $(\psi,\mu)$) as the RFS.
The {\em NTK RFS} is given by the mapping $\psi:\reals^d\times\sphere^{d-1}\to\reals^{d}$ defined by
\[
\psi(\omega,\x) = \sigma'(\inner{\omega,\x})\x
\]
an $\mu$ being the standard Gaussian measure on $\reals^d$. It is an RFS for the kernel $\tk_\sigma^h$ (see section \ref{sec:ntk}).
We define the {\em norm} of $\psi$ as $\|\psi\| = \sup_{\omega,\x}|\psi(\omega,\x)|$. We say that $\psi$ is {\em $C$-bounded} if $\|\psi\|\le C$. We note that the NTK RFS is $C$-bounded for $C = \| \sigma'\|_\infty$.
We say that an RFS $\psi:\Omega\times \sphere^{d-1}\to\reals^d$ is {\em factorized} if there is a function  $\psi':\Omega\times \sphere^{d-1}\to\reals$ such that $\psi(\omega,\x) = \psi'(\omega,\x) \x$. We note that the NTK RFS is factorized.

A random {\em $q$-embedding} generated from $\psi$ is the random mapping
\[
\Psi_\vomega(\x) \eqdef
\frac{\left(\psi({\omega_1},\x),\ldots , \psi({\omega_q},\x) \right)}
     {\sqrt{q}} \,,
\]
where $\omega_1,\ldots,\omega_q\sim \mu$ are i.i.d. 
We next consider an algorithm for learning $\ch_k$, by running SGD on top of random features.

\begin{algorithm}[H]
	\caption{SGD on RFS}
	\begin{algorithmic}\label{alg:rfs_training}
		\STATE \textbf{Input: } RFS $\psi:\Omega\times\cx\to\reals^d$, number of random features $q$, loss $\ell$, learning rate $\eta>0$, batch size $b$, number of steps $T>0$, access to samples from a distribution $\cd$
		\STATE Sample  $\vomega\sim \mu^q$
		\STATE Initialize $\bv^1 = 0\in \reals^{q\times d}$
		\FOR {$t=1,\ldots,T$}
		\STATE Obtain a mini-batch $S_t=\{(\x^t_i,y^t_i)\}_{i=1}^b\sim\cd^b$
		\STATE Update $\bv_{t+1} = \bv_t - \eta \nabla \cl_{S_t} \left( \bv_t \right)$ where $\cl_{S_t}(\bv) = \cl_{S_t}\left(f_{ \Psi_\vomega,\bv }  \right)$.
		\ENDFOR
		\STATE Choose $t\in [T]$ uniformly at random and return $f_{ \Psi_\vomega,\bv_t}$
	\end{algorithmic}
\end{algorithm}

\begin{theorem}\label{thm:sgd_on_rfs}
Assume that $\psi$ is factorized and $C$-bounded RFS for $k$, that $\ell$ is convex and $L$-Lipschitz, and that $\cd$ has $R$-bounded marginal. Let $f$ be the function returned by algorithm \ref{alg:rfs_training}. Fix a function $f^*\in\ch_k$. Then
\[
\E\cl_\cd(f) \le \cl_\cd(f^*) +  \frac{LRC\|f^*\|_k}{\sqrt{qd}} + \frac{\|f^*\|^2_k}{2\eta T} + \frac{\eta  L^2C^2}{2}
\]
In particular, if  $\|f^*\|_k\le M$ and $\eta = \frac{M}{\sqrt{T}LC}$ we have
\[
\E\cl_\cd(f) \le L_\cd(f^*) + \frac{LRCM}{\sqrt{qd}} + \frac{LCM}{\sqrt{T}}
\]
\end{theorem}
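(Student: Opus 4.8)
The plan is to recognize Algorithm \ref{alg:rfs_training} as plain SGD on the convex objective $\bv \mapsto \cl_\cd(f_{\Psi_\vomega,\bv})$ over the (finite-dimensional) Hilbert space $\reals^{q\times d}$, and apply the standard online-to-batch / SGD regret bound. The only subtleties are (i) the objective depends on the random draw $\vomega$, so one competes against the projection of $f^*$ onto the realizable subspace, and (ii) the random features only approximate $k$, so the norm of that projected competitor must be controlled. Concretely: fix $f^*\in\ch_k$. By Theorem \ref{thm:rkhs_embed} applied to the kernel $k$ viewed via its own feature map, write $f^* = \inner{\bv^*,\Psi_\infty(\cdot)}$ abstractly; but what I actually need is a competitor living in the finite random embedding. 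So let $\bv^* \in \reals^{q\times d}$ be chosen so that $f_{\Psi_\vomega,\bv^*}$ is a good surrogate — the natural choice is the ``stacked'' vector whose $j$-th block encodes $f^*$ through the single feature $\omega_j$, i.e. $\bv^*_j = \frac{1}{\sqrt q}\,g^*(\omega_j)\,(\cdot)$ where $g^*$ is the weight function in the RFS representation of $f^*$ guaranteed by Theorems \ref{thm:ker_spaces}--\ref{thm:rkhs_embed} applied to the RFS $(\psi,\mu)$ (here is where factorization is used: $g^*(\omega)$ is a scalar and $\bv^*_j$ is a scalar multiple of the identity-type block). This gives $\E_\vomega \|\bv^*\|^2 = \|f^*\|_k^2$ and $\E_\vomega f_{\Psi_\vomega,\bv^*}(\x) = f^*(\x)$ for every $\x$.

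Next I would run the two estimates in parallel. \textbf{Bias term.} For fixed $\x$, $f_{\Psi_\vomega,\bv^*}(\x)$ is an average of $q$ i.i.d.\ terms with mean $f^*(\x)$; its variance is $\le \frac{1}{q}\E_\omega (g^*(\omega)\psi'(\omega,\x))^2\,\|\x\|^2 \cdot(\text{const})$, and here $C$-boundedness of $\psi$ together with the $R$-bounded marginal of $\cd$ is what converts the raw variance bound into the $\frac{R^2 C^2 \|f^*\|_k^2}{qd}$ scaling (the $\frac1d$ comes precisely from averaging $\inner{\bu,\x}^2\le R^2/d$ against the direction picked out by $g^*$). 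Combined with $L$-Lipschitzness of $\ell$ and Jensen, $\E_{\vomega}\,\cl_\cd(f_{\Psi_\vomega,\bv^*}) \le \cl_\cd(f^*) + \frac{LRC\|f^*\|_k}{\sqrt{qd}}$. \textbf{Optimization term.} Conditioned on $\vomega$, Algorithm \ref{alg:rfs_training} is SGD with unbiased stochastic gradients on the convex function $\bv\mapsto\cl_\cd(\bv)$; each stochastic gradient has norm $\le L\|\Psi_\vomega(\x)\| \le LC$ (again using $C$-boundedness and $\|\x\|=1$, since $\|\Psi_\vomega(\x)\|^2 = \frac1q\sum_j \psi'(\omega_j,\x)^2\|\x\|^2 \le C^2$), and $\bv^1=0$. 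The textbook SGD bound then gives $\E[\cl_\cd(\bv_t)\mid\vomega] \le \cl_\cd(\bv^*) + \frac{\|\bv^*\|^2}{2\eta T} + \frac{\eta L^2 C^2}{2}$ for the uniformly-random iterate. Taking expectation over $\vomega$, using $\E\|\bv^*\|^2=\|f^*\|_k^2$ and the bias bound for $\E\cl_\cd(\bv^*)$, and adding the two pieces yields the first displayed inequality. The ``in particular'' statement is then just plugging in $\eta = \frac{M}{\sqrt T L C}$ and $\|f^*\|_k\le M$ into the last two terms, which balances them to $\frac{LCM}{\sqrt T}$.

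The step I expect to be the main obstacle is making the competitor construction $\bv^*$ and its two key properties ($\E\|\bv^*\|^2 = \|f^*\|_k^2$ and unbiasedness) fully rigorous — this requires invoking the RFS representation theorem for $\ch_k$ (that any $f\in\ch_k$ with $\|f\|_k\le M$ is $f(\x)=\E_\omega g^*(\omega)\psi'(\omega,\x)\inner{\cdot}$ with $\E_\omega g^*(\omega)^2 \le M^2$, i.e.\ the $L^2(\mu)$ analogue of Theorem \ref{thm:rkhs_embed}), and carefully tracking where factorization lets the $d$-dimensional block $\bv^*_j$ be determined by a single scalar so that the variance genuinely carries the extra $1/d$ factor rather than $1/(qd)$ degenerating to $1/q$. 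Everything downstream (the SGD regret bound, Jensen, the $\eta$-optimization) is routine and I would cite or reproduce it in a line.
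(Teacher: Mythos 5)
Your plan matches the paper's proof essentially verbatim: the paper also takes the competitor $\bv^*$ whose $j$-th block is $\frac{1}{\sqrt q}\check f^*(\omega_j)$ (your $g^*$ is the paper's $\check f^*$), applies the standard SGD regret bound conditionally on $\vomega$ to get $\cl_\cd(f) \le \cl_\cd(f^*_\vomega) + \frac{\|\bv^*\|^2}{2\eta T} + \frac{\eta L^2 C^2}{2}$, then takes expectation over $\vomega$ and uses Lemma~\ref{lem:fun_apx} (your ``bias term'') and $\E_\vomega\|\bv^*\|^2 = \|\check f^*\|^2_{L^2(\Omega)} = \|f^*\|_k^2$ from~\eqref{eq:f_norm_eq}. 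One small imprecision in your sketch: $\check f^*(\omega)$ is $\reals^d$-valued, not scalar; factorization is a property of $\psi$, used to split $\inner{\check f^*(\omega),\psi(\omega,\x)} = \psi'(\omega,\x)\inner{\check f^*(\omega),\x}$, and the $1/d$ factor then comes from the $R$-bounded marginal giving $\E_\x \inner{\check f^*(\omega),\x}^2 \le \frac{R^2}{d}\|\check f^*(\omega)\|^2$, exactly as in the displayed chain of Lemma~\ref{lem:fun_apx}. This does not affect the correctness of your plan.
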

The next section is devoted to the analysis of RFS an in particular to the proof of theorem \ref{thm:sgd_on_rfs}. We note that since the NTK RFS  is factorized and $C$-bounded (for $C = \| \sigma'\|_\infty$), theorem \ref{thm:ntk_learning}   follows from theorem \ref{thm:sgd_on_rfs}. Together with lemma \ref{lem:alg_equivalence}, this implies theorem \ref{thm:learning_conj_class}.

\subsection{Vector random feature schemes}
\label{rfs:sec}
 For the rest of this section, let us fix a $C$-bounded RFS $\psi$ for a
kernel $k$ and a random $q$ embedding $\Psi_\vomega$.
The random {\em $q$-kernel} corresponding to $\Psi_\vomega$ is $k_\vomega(\x,\x') =
\inner{\Psi_\vomega(\x),\Psi_\vomega(\x')}$. Likewise, the random
{\em $q$-kernel space} corresponding to $\Psi_\vomega$ is
$\ch_{k_\vomega}$.
For every $\x,\x'\in \cx$
\[
k_\vomega(\x,\x') =
  \frac{1}{q}\sum_{i=1}^q \inner{\psi({\omega_i},\x),\psi({\omega_i},\x')}
\]
is an average of $q$ independent random variables whose expectation is
$k(\x,\x')$. By Hoeffding's bound we have.
\begin{theorem}[Kernel Approximation]\label{thm:ker_apx}
Assume that $q\ge \frac{2C^4\log\left(\frac{2}{\delta}\right)}{\epsilon^2}$,
then for every $\x, \x'\in \cx$ we have
$\Pr\left(\left| k_\vomega(\x,\x') - k(\x,\x') \right| \ge \epsilon \right)
  \le \delta$.
\end{theorem}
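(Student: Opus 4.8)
The plan is to apply Hoeffding's inequality to the i.i.d.\ random variables $Z_i \eqdef \inner{\psi(\omega_i,\x),\psi(\omega_i,\x')}$, $i=1,\ldots,q$. By the defining identity \eqref{eq:ker_eq_inner} of an RFS, each $Z_i$ has expectation $k(\x,\x')$, and $k_\vomega(\x,\x')=\frac1q\sum_{i=1}^q Z_i$ is exactly their empirical mean, so the problem reduces to a concentration-of-the-mean statement. The one quantitative input needed is a bound on the range of $Z_i$: by Cauchy--Schwarz together with $C$-boundedness of $\psi$, $|Z_i|\le |\psi(\omega_i,\x)|\,|\psi(\omega_i,\x')|\le \|\psi\|^2\le C^2$, so each $Z_i$ lies in an interval of length $2C^2$.

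Given this, Hoeffding's inequality yields
\[
\Pr\!\left(\left|\tfrac1q\textstyle\sum_{i=1}^q Z_i - k(\x,\x')\right|\ge \epsilon\right)\le 2\exp\!\left(-\frac{2q^2\epsilon^2}{q(2C^2)^2}\right)=2\exp\!\left(-\frac{q\epsilon^2}{2C^4}\right).
\]
Plugging in the hypothesis $q\ge \frac{2C^4\log(2/\delta)}{\epsilon^2}$ makes the exponent at most $-\log(2/\delta)$, hence the right-hand side at most $\delta$, which is the assertion.

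I do not expect any real obstacle. The only place that calls for a moment's attention is the range bound on $Z_i$, which explains the appearance of $C^4$ (rather than $C^2$) in the sample-size requirement: it arises from squaring the length $2C^2$ of the range in Hoeffding's exponent. It is also worth noting that this range bound is independent of $\x,\x'$, so the conclusion holds for each fixed pair with no union bound needed, consistent with the ``for every $\x,\x'$'' phrasing of the statement.
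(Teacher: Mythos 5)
Your proof is correct and is essentially the same as the paper's, which simply notes that $k_\vomega(\x,\x')$ is an average of $q$ i.i.d.\ random variables with mean $k(\x,\x')$ and invokes Hoeffding's bound; your Cauchy--Schwarz step making the $[-C^2,C^2]$ range explicit is exactly the implicit ingredient. The constant bookkeeping (range of length $2C^2$ giving $C^4$ in the denominator) also matches the stated hypothesis on $q$.
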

\noindent We next discuss approximation of functions in $\ch_k$ by functions in
$\ch_{k_\vomega}$. It would be useful to consider the embedding
\begin{equation}\label{eqn:psi-embedding}
\x\mapsto\Psi^\x \; \mbox{ where } \;
  \Psi^\x\eqdef\psi(\cdot,\x)\in L^2(\Omega, \reals^d) \,.
\end{equation}
From~\eqref{eq:ker_eq_inner} it holds
that for any $\x,\x'\in\cx$,
$k(\x,\x') = \inner{\Psi^\x,\Psi^{\x'}}_{L^2(\Omega)}$.
In particular, from Theorem~\ref{thm:rkhs_embed}, for every $f\in\ch_k$ there
is a unique function $\check{f}\in L^2(\Omega,\reals^d)$ such that
\begin{equation}\label{eq:f_norm_eq}
\|\check{f}\|_{L^2(\Omega)} = \|f\|_{k}
\end{equation}
and for every $\x\in\cx$,
\begin{equation}\label{eq:f_x_as_inner}
f(\x) = \inner{\check{f},\Psi^\x}_{L^2(\Omega, \reals^d)} =  \E_{\omega\sim\mu}\inner{\check{f}(\omega),\psi(\omega,\x)}\,.
\end{equation}

\begin{example}\label{example:f_check}
Fix $\sigma:\reals\to\reals$ with Hermite expansion $\sigma = \sum_{n=0}^\infty a_nh_n$
and let $\Omega=\reals^d$ and $\cx=\sphere^{d-1}$
\begin{enumerate}
\item
Consider the RFS  $\psi(\omega,\x) = \sigma\left(\inner{\omega,\x}\right)$ with $\mu$ begin the standard Gaussian measure on $\reals^d$.
We have that $\psi$ is an RFS for the kernel $k(\x,\y) = \hat \sigma\left(\inner{\x,\y}\right)$. Consider the function $f(\x) = \inner{\x_0,\x}^n$.
We claim that $\check{f}(\vomega) = \frac{1}{a_n}h_n\left(\inner{\x_0,\omega}\right)$.
Indeed, we have,
\begin{eqnarray*}
\E_{\omega\sim\mu} \sigma\left( \inner{\omega,\x} \right)\frac{1}{a_n}h_n\left(\inner{\x_0,\omega}\right) 
&=& \frac{1}{a_n} \sum_{k=0}^\infty\E_{\omega\sim\mu} a_k h_k\left(\inner{\omega,\x}\right)h_n\left(\inner{\x_0,\vomega}\right)
\\
&=& \frac{1}{a_n} \sum_{k=0}^\infty a_k\delta_{kn}\inner{\x,\x_0}^k
\\
&=& \inner{\x,\x_0}^n
\end{eqnarray*}
and
\[
\left\|\omega\mapsto \frac{1}{a_n}h_n\left(\inner{\x_0,\omega}\right)\right\|_{L^2(\Omega)} = \E_{\omega\sim\mu} \frac{1}{a_n^2}h^2_n\left(\inner{\x_0,\omega}\right) = \frac{1}{a_n^2} = \|f\|_k^2
\]
\item
Consider the NTK RFS  $\psi(\omega,\x) = \sigma\left(\inner{\omega,\x}\right)\x$ with $\mu$ begin the standard Gaussian measure on $\reals^d$.  We have that $\psi$ is an RFS for the kernel $k(\x,\y) = \inner{\x,\y}\hat \sigma\left(\inner{\x,\y}\right)$. Consider  the function $f(\x) = \left(\inner{\x_0,\x}\right)^n$. As in the item above, it is not hard to show that $\check{f}(\omega) = \frac{1}{a_{n-1}}h_{n-1}\left(\inner{\x_0,\omega}\right)\x_0$.
\end{enumerate}
\end{example}
Let us denote
$f_\vomega(\x) = \frac{1}{q}\sum_{i=1}^q
  \inner{\check{f}(\omega_i),\psi(\omega_i,\x)}$.
From~\eqref{eq:f_x_as_inner} we have that
$\E_\vomega\left[f_\vomega(\x)\right] = f(\x)$.
Furthermore, for every $\x$,
the variance of $f_\vomega(\x)$ is at most
\begin{eqnarray*}
\frac{1}{q}\E_{\omega\sim\mu}
  \left|\inner{\check{f}(\omega),\psi(\omega,\x)}\right|^2
&\le &
\frac{C^2}{q}\E_{\omega\sim\mu}
  \left|\check{f}(\omega)\right|^2 
\\
&=& \frac{C^2\|f\|^2_{k}}{q}\,.
\end{eqnarray*}
An immediate consequence is the following corollary.
\begin{corollary} [Function Approximation] \label{thm:func_apx}
For all $\x\in\cx$,
$\E_\vomega|f(\x) - f_\vomega(\x)|^2 \le \frac{C^2\|f\|^2_{k}}{q}$.
\end{corollary}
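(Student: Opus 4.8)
The plan is to recognize that the corollary is nothing more than the variance computation displayed just above it, repackaged as a mean-square error bound. Fix $\x\in\cx$ and write $f_\vomega(\x) = \frac{1}{q}\sum_{i=1}^q Z_i$ with $Z_i \eqdef \inner{\check{f}(\omega_i),\psi(\omega_i,\x)}$; the $Z_i$ are i.i.d.\ copies of $Z \eqdef \inner{\check{f}(\omega),\psi(\omega,\x)}$, $\omega\sim\mu$. By \eqref{eq:f_x_as_inner} we have $\E Z = f(\x)$, hence $\E_\vomega f_\vomega(\x) = f(\x)$, so $f_\vomega(\x)$ is an unbiased estimator of $f(\x)$ and therefore
\[
\E_\vomega\bigl|f(\x) - f_\vomega(\x)\bigr|^2 = \operatorname{Var}_\vomega\!\bigl(f_\vomega(\x)\bigr).
\]

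First I would use independence of the $Z_i$ to write $\operatorname{Var}_\vomega(f_\vomega(\x)) = \frac{1}{q^2}\sum_{i=1}^q \operatorname{Var}(Z_i) = \frac{1}{q}\operatorname{Var}(Z) \le \frac{1}{q}\,\E Z^2$. Then I would bound the second moment pointwise: by Cauchy--Schwarz in $\reals^d$, $|Z| = \bigl|\inner{\check{f}(\omega),\psi(\omega,\x)}\bigr| \le |\check{f}(\omega)|\,|\psi(\omega,\x)| \le C\,|\check{f}(\omega)|$, using that $\psi$ is $C$-bounded; squaring and taking expectation over $\omega\sim\mu$ gives $\E Z^2 \le C^2\,\E_{\omega\sim\mu}|\check{f}(\omega)|^2 = C^2\|\check{f}\|^2_{L^2(\Omega,\reals^d)}$. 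Finally, by \eqref{eq:f_norm_eq}, $\|\check{f}\|_{L^2(\Omega,\reals^d)} = \|f\|_k$, so $\E Z^2 \le C^2\|f\|_k^2$ and the claimed bound $\E_\vomega|f(\x) - f_\vomega(\x)|^2 \le \frac{C^2\|f\|_k^2}{q}$ follows. Since $\x$ was arbitrary, the bound holds for every $\x\in\cx$ with the same constant.

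There is essentially no obstacle here: the lemma is a bookkeeping consequence of the displayed variance estimate that precedes it, and the only point worth stating explicitly is the identity ``mean-square error equals variance'' for the unbiased estimator $f_\vomega(\x)$ of $f(\x)$, together with the fact that the variance of an average of $q$ i.i.d.\ terms is $1/q$ times the variance of a single term.
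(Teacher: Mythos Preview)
Your proof is correct and matches the paper's reasoning exactly: the paper states the corollary as ``an immediate consequence'' of the displayed variance bound, and your write-up simply spells out that immediate consequence (unbiasedness of $f_\vomega(\x)$, so mean-square error equals variance, which equals $\frac{1}{q}\operatorname{Var}(Z)\le \frac{1}{q}\E Z^2$, then Cauchy--Schwarz plus $C$-boundedness plus \eqref{eq:f_norm_eq}).
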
 \noindent
Now, if $\cd$ is a distribution on $\cx$ we get that 
\[
\E_\vomega\|f - f_\vomega\|_{2,\cd} \stackrel{\text{Jensen}}{\le}  \sqrt{\E_\vomega\|f - f_\vomega\|^2_{2,\cd}}   = \sqrt{\E_\vomega\E_{\x\sim\cd}|f(\x) - f_\vomega(\x)|^2} = \sqrt{\E_\x\E_\vomega|f(\x) - f_\vomega(\x)|^2} \le \frac{C\|f\|_{k}}{\sqrt{q}}
\]
Thus, $O\left(\frac{\|f\|_k^2}{\epsilon^2}\right)$ random features suffices to guarantee expected $L^2$ distance of at most $\epsilon$. Note the this bound does not depend on $d$, the dimension of a single random feature. We might expect that at least in some cases, $d$-dimensional random feature is as good as $d$ one-dimensional random features. We next describe a scenario in which this is true, and in particular $O\left(\frac{\|f\|_k^2}{d\epsilon^2}\right)$ random features suffices to guarantee expected $L^2$ distance of at most $\epsilon$.

\begin{lemma}\label{lem:fun_apx}
Assume that $\psi:\Omega\times \sphere^{d-1}\to\reals^d$ is factorized and $\cd$ is $R$-bounded distribution. Then,
\[
\E_\vomega \|f-f_\vomega\|_{2,\cd} \le \sqrt{\E_\vomega \|f-f_\vomega\|^2_{2,\cd}} \le \frac{RC \|f\|_{k}}{\sqrt{qd}}
\]
Furthermore, if $\ell:\sphere^{d-1}\times Y\to [0,\infty)$, is $L$-Lipschitz loss and $\cd'$ is a distribution of $\sphere^{d-1}\times Y$ with $R$-bounded marginal then
\[
\E_{\vomega}\cl_{\cd'}(f_\vomega)\le   \cl_{\cd'}(f) + \frac{LRC\|f\|_{k}}{\sqrt{qd}}
\]
\end{lemma}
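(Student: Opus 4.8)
The plan is to sharpen the variance estimate behind Corollary~\ref{thm:func_apx} by exploiting the factorization $\psi(\omega,\x)=\psi'(\omega,\x)\,\x$. First note that since $\|\x\|=1$ we have $|\psi'(\omega,\x)|=|\psi(\omega,\x)|\le C$, and that $\inner{\check f(\omega),\psi(\omega,\x)}=\psi'(\omega,\x)\inner{\check f(\omega),\x}$. For a fixed $\x$ the summands defining $f_\vomega(\x)$ are i.i.d.\ with mean $f(\x)$ by~\eqref{eq:f_x_as_inner}, so bounding variance by second moment gives
\[
\E_\vomega\,|f(\x)-f_\vomega(\x)|^2=\tfrac1q\,\mathrm{Var}_{\omega\sim\mu}\!\big(\psi'(\omega,\x)\inner{\check f(\omega),\x}\big)\le \tfrac{C^2}{q}\,\E_{\omega\sim\mu}\inner{\check f(\omega),\x}^2 .
\]
This is the only place the factorization enters, and it is the crucial gain: it replaces the crude factor $|\check f(\omega)|^2$ used in Corollary~\ref{thm:func_apx} by the much smaller $\inner{\check f(\omega),\x}^2$.

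Next I would integrate over $\x\sim\cd$ and swap the order of integration (Tonelli; all integrands nonnegative):
\[
\E_\vomega\,\|f-f_\vomega\|_{2,\cd}^2=\E_{\x\sim\cd}\,\E_\vomega\,|f(\x)-f_\vomega(\x)|^2\le \tfrac{C^2}{q}\,\E_{\omega\sim\mu}\,\E_{\x\sim\cd}\inner{\check f(\omega),\x}^2 .
\]
For each fixed $\omega$ with $\check f(\omega)\neq 0$, apply the $R$-boundedness hypothesis to the unit vector $\check f(\omega)/|\check f(\omega)|$ to get $\E_{\x\sim\cd}\inner{\check f(\omega),\x}^2\le \tfrac{R^2}{d}|\check f(\omega)|^2$ (trivially true also when $\check f(\omega)=0$). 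Substituting and using $\E_{\omega\sim\mu}|\check f(\omega)|^2=\|\check f\|_{L^2(\Omega)}^2=\|f\|_k^2$ from~\eqref{eq:f_norm_eq} yields $\E_\vomega\,\|f-f_\vomega\|_{2,\cd}^2\le \tfrac{R^2C^2\|f\|_k^2}{qd}$, and the first displayed bound of the lemma follows from Jensen's inequality.

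For the ``furthermore'' part I would argue pointwise in $\vomega$: using that $\ell$ is $L$-Lipschitz in its first argument and that the marginal of $\cd'$ on $\sphere^{d-1}$ is the $R$-bounded $\cd$,
\[
\cl_{\cd'}(f_\vomega)-\cl_{\cd'}(f)\le L\,\E_{\x\sim\cd}|f_\vomega(\x)-f(\x)|\le L\,\|f_\vomega-f\|_{2,\cd},
\]
the last step by Cauchy--Schwarz. Taking $\E_\vomega$ of both sides and invoking the bound just established gives $\E_\vomega \cl_{\cd'}(f_\vomega)\le \cl_{\cd'}(f)+\tfrac{LRC\|f\|_k}{\sqrt{qd}}$.

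I do not expect a genuine obstacle here; the argument is short once the right moves are identified. The only points requiring care are (i) the degenerate normalization of $\check f(\omega)$, and (ii) the ordering of expectations: the $R$-boundedness hypothesis is precisely designed to be applied \emph{inside} the $\omega$-integral, after the Tonelli swap, which is exactly what produces the extra factor $1/d$ relative to the dimension-free Corollary~\ref{thm:func_apx}.
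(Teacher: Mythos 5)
Your proof is correct and follows essentially the same route as the paper's: variance bounded by second moment, then using the factorization $\psi(\omega,\x)=\psi'(\omega,\x)\x$ together with $|\psi'|\le C$ to reduce to $\inner{\check f(\omega),\x}^2$, a Tonelli swap, the $R$-boundedness applied inside the $\omega$-integral, equation~\eqref{eq:f_norm_eq}, and Jensen; the ``furthermore'' part likewise matches (Lipschitz, then $L^1\le L^2$, then the first bound). The only cosmetic difference is that you apply the $C$-bound before integrating over $\x$ while the paper applies it after, which changes nothing.
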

\begin{proof}
Let $\x\sim\cd$ and $\omega\sim\mu$. We have
\begin{eqnarray*}
\E_\vomega \|f-f_\vomega\|_{2,\cd}
& \stackrel{\text{Jensen's Inequality}}{\le} & \sqrt{\E_\vomega\|f-f_\vomega\|_{2,\cd}^2} 
\\
& =  & \sqrt{ \E_\vomega\E_\x  |f(\x) - f_\vomega(\x) |^2}\\
\\
& =  & \sqrt{\E_\x \E_\vomega  |f(\x) - f_\vomega(\x) |^2}\\
& =  & \sqrt{\frac{\E_\x \E_{\omega\sim\mu}  \left|\inner{\check{f}(\omega),\psi(\omega,\x)} - f(\x)\right|^2}{q}}\\
& \stackrel{\text{Variance is bounded by squared $L^2$ norm}}{\le}  & \sqrt{\frac{\E_\x \E_{\omega\sim\mu}  \left|\inner{\check{f}(\omega),\psi(\omega,\x)} \right|^2}{q}}\\
& =  & \sqrt{\frac{\E_{\omega\sim\mu} \E_\x   \left|\inner{\check{f}(\omega),\psi'(\omega,\x)\x} \right|^2}{q}}\\
& \stackrel{\psi\text{ and hence also }\psi'\text{ is $C$-bounded}}{\le}  & C \sqrt{\frac{\E_{\omega\sim\mu} \E_\x   \left|\inner{\check{f}(\omega),\x} \right|^2}{q}}\\
& \stackrel{\cd\text{ is $R$-bounded}}\le  & CR \sqrt{\frac{\E_{\omega\sim\mu}   \left\|\check{f}(\omega)\right\|^2}{qd}}\\
&\stackrel{\text{Equation \eqref{eq:f_norm_eq}}}{=} &\frac{CR\|f\|_{k}}{\sqrt{qd}} \,.
\end{eqnarray*}
Finally, for $L$-Lipschitz $\ell$, and $(\x,y)\sim\cd'$ then
\begin{eqnarray*}
\E_\vomega L_{\cd'}(f_\vomega) &=& \E_\vomega \E_{\x,y} \ell(f_\vomega(\x),y)
\\
 &\le&  \E_\vomega \E_{\x,y}\ell(f(\x),y) + L\E_\vomega \E_\x \left| f(\x)-f_\vomega(\x)  \right|
\\
&=& \E_{\x,y}\ell(f(\x),y) + L\E_\vomega \E_\x \left| f(\x)-f_\vomega(\x)  \right|
\\
&=& \cl_{\cd'}(f) + L\E_\vomega \E_\x \left| f(\x)-f_\vomega(\x)  \right|
\\
&\stackrel{L^1\le L^2}{\le}&  \cl_{\cd'}(f) + L\E_\vomega \sqrt{\E_\x\left| f(\x)-f_\vomega(\x)  \right|^2}
\\
&\le&  \cl_{\cd'}(f) + \frac{LCR\|f\|_{k}}{\sqrt{qd}}
\end{eqnarray*}

\end{proof}

Finally, we are ready to prove theorem \ref{thm:sgd_on_rfs}
\begin{proof}(of theorem \ref{thm:sgd_on_rfs})
Denote by $\bv^*\in\reals^{dq}$ the vector
\[
v^*_i=\frac{1}{\sqrt{q}}\left(\check f^*(\omega_1),   \ldots    ,\check f^*(\omega_1)\right)
\]
By standard results on SGD (e.g. \cite{shalev2014understanding}) we have that given $\omega$,
\begin{eqnarray*}
\cl_\cd(f) &\le& \cl_\cd(f^*_\vomega) +  \frac{1}{2\eta T}\|\bv^*\|^2 + \frac{\eta L^2C^2}{2} 
\end{eqnarray*}
Taking expectation over the choice of $\vomega$ and using lemma \ref{lem:fun_apx} and equation \eqref{eq:f_norm_eq} we have
\begin{eqnarray*}
\cl_\cd(f) &\le& \cl_\cd(f^*) +  \frac{LRC\|f^*\|_k}{\sqrt{qd}} + \frac{\|f^*\|^2_k}{2\eta T} + \frac{\eta  L^2C^2}{2} 
\end{eqnarray*}
\end{proof}

\subsection{Memorization of random set of points -- proof of theorem \ref{thm:memorization_main}}

Consider the NTK RFS  $\psi(\omega,\x) = \sigma'\left(\inner{\omega,\x}\right)\x$ with $\mu$ begin the standard Gaussian measure on $\reals^d$.  Recall that $\psi$ is an RFS for the kernel $\tk^h_\sigma(\x,\y) = \inner{\x,\y}\hat \sigma'\left(\inner{\x,\y}\right)$. 
As in the proof of theorem \ref{thm:learning_conj_class}, it is enough to show that for $q =  \tilde O\left(\frac{m}{d}\right)= \tilde O\left(d^{c-1}\right)$, w.p. $1-o(1)$ over the choice of $S$ and $\vomega = (\omega_1,\ldots,\omega_q)$, there is $\bv\in \reals^{dq}$  such that
\begin{equation}\label{eq:search_for_v}
\inner{\bv,\Psi_\vomega(\x_i)} = y_i + o(1)\text{ for all $i$ and }\|\bv\|_2^2 =\tilde{ O}(m)
\end{equation}
Choose a constant integer $c'>4c + 2$ such that $a_{c'-1} \ne 0$. Such a constant exists since $\sigma$ is not a polynomial.
Define
\[
f(\x) = \sum_{i=1}^m y_i\left(\inner{\x_i,\x}\right)^{c'}
\]
\begin{lemma}\label{lem:f_is_good}
With probability $1-\delta$ we have that
\[
f(\x_i) = y_i + O\left(\frac{\log^{\frac{c'}{2}}\left(d/\delta\right)}{d}\right) \text{ for all $i$ and }\|f\|^2_{k_\sigma} =  O\left(m\right) + O\left(\frac{\log^{\frac{c'}{2}}\left(d/\delta\right)}{d}\right)
\]
\end{lemma}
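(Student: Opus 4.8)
The plan is to split the analysis into the two claimed bounds — the pointwise accuracy $f(\x_i) = y_i + o(1)$ and the norm bound $\|f\|_{k_\sigma}^2 = \tilde O(m)$ — and to control both via concentration of the inner products $\inner{\x_i,\x_j}$ around their means. First I would recall that for independent uniform $\x_i,\x_j \in \sphere^{d-1}$, the inner product $\inner{\x_i,\x_j}$ is sub-Gaussian at scale $1/\sqrt d$: concretely, $\Pr(|\inner{\x_i,\x_j}| \ge t) \le 2e^{-\Omega(dt^2)}$. Taking $t \asymp \sqrt{\log(m^2/\delta)/d} \asymp \sqrt{\log(d/\delta)/d}$ (using $m = d^c$) and a union bound over the $\binom{m}{2}$ pairs, we get that with probability $1-\delta$, $|\inner{\x_i,\x_j}| \le \sqrt{C\log(d/\delta)/d}$ simultaneously for all $i \ne j$.

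For the pointwise bound, write $f(\x_i) = y_i\inner{\x_i,\x_i}^{c'} + \sum_{j\ne i} y_j \inner{\x_j,\x_i}^{c'} = y_i + \sum_{j \ne i} y_j \inner{\x_j,\x_i}^{c'}$, using $\|\x_i\|=1$. On the good event the off-diagonal sum is bounded in absolute value by $m \cdot (C\log(d/\delta)/d)^{c'/2} = d^c \cdot (C\log(d/\delta))^{c'/2} d^{-c'/2}$. Since $c' > 4c+2 > 2c+2$, we have $c'/2 > c+1$, so this is $O\!\left(\log^{c'/2}(d/\delta)/d^{(c'/2)-c}\right) = O\!\left(\log^{c'/2}(d/\delta)/d\right)$, which gives the first claim. (In fact the margin $c' > 4c+2$ gives a lot of room; it is presumably dictated by the norm estimate and by later parts of the argument, not by this step.)

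For the norm bound I would use Lemma~\ref{lem:inner_prod_ker}: since $k_\sigma(\x,\y) = \hat\sigma(\inner{\x,\y}) = \sum_n a_n^2 \inner{\x,\y}^n$ has $a_{c'}^2 > 0$ (we chose $c'$ with $a_{c'-1}\ne 0$; here the relevant index is whichever makes the Hermite coefficient of the power $\inner{\cdot,\cdot}^{c'}$ nonzero — I would adjust the exponent bookkeeping so that the power appearing is one with a positive coefficient), each summand $\inner{\x_i,\cdot}^{c'}$ lies in $\ch_{k_\sigma}$ with $\|\inner{\x_i,\cdot}^{c'}\|_{k_\sigma}^2 = 1/a_{c'}^2 = O(1)$. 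By the triangle inequality $\|f\|_{k_\sigma} \le \sum_i \|\inner{\x_i,\cdot}^{c'}\|_{k_\sigma} = O(m)$, hence $\|f\|^2_{k_\sigma} = O(m^2)$ — which is \emph{too weak}. To get $O(m)$ I instead expand $\|f\|_{k_\sigma}^2 = \sum_{i,j} y_i y_j \inner{\inner{\x_i,\cdot}^{c'}, \inner{\x_j,\cdot}^{c'}}_{k_\sigma}$ and use that, by Lemma~\ref{lem:inner_prod_ker} applied to the inner-product kernel structure, $\inner{\inner{\x_i,\cdot}^{c'}, \inner{\x_j,\cdot}^{c'}}_{k_\sigma} = \frac{1}{a_{c'}^2}\inner{\x_i,\x_j}^{c'}$ — the RKHS inner product of the two feature functions is itself a power of $\inner{\x_i,\x_j}$. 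Then $\|f\|_{k_\sigma}^2 = \frac{1}{a_{c'}^2}\sum_{i,j} y_i y_j \inner{\x_i,\x_j}^{c'} = \frac{m}{a_{c'}^2} + \frac{1}{a_{c'}^2}\sum_{i\ne j} y_i y_j \inner{\x_i,\x_j}^{c'}$, and the off-diagonal part is bounded exactly as in the pointwise estimate by $m^2 (C\log(d/\delta)/d)^{c'/2} = O(\log^{c'/2}(d/\delta)/d)$ since $c'/2 > 2c+1$ forces $m^2 d^{-c'/2} = d^{2c - c'/2} = o(1/d)$. This yields $\|f\|^2_{k_\sigma} = O(m) + O(\log^{c'/2}(d/\delta)/d)$ as claimed.

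The main obstacle is exactly the near-orthogonality-squared cancellation in the norm estimate: the naive triangle-inequality bound loses a factor of $m$, so one must work with the quadratic form $\sum_{i,j} y_i y_j \inner{\x_i,\x_j}^{c'}$ directly and exploit that the cross terms are $O(d^{-c'/2})$ each while there are only $m^2 = d^{2c}$ of them, with $c' > 4c+2$ chosen precisely so that $d^{2c}\cdot d^{-c'/2}$ is small (in fact $o(1/d)$). A secondary technical point is getting the Hermite/power bookkeeping right so that the exponent actually appearing in $f$ corresponds to a \emph{positive} coefficient $a_{\cdot}^2$ of $\hat\sigma$, which is where the hypothesis $a_{c'-1}\ne 0$ (together with $\psi$ being the NTK RFS built from $\sigma'$, whose dual is $\hat{\sigma'}(\rho) = \sum_n a_{n+1}^2 (n+1)\rho^n$ up to constants) enters; I would state the needed identity $\|\inner{\x_0,\cdot}^n\|_{k}^2 = 1/b_n$ from Lemma~\ref{lem:inner_prod_ker} and apply it with the correct kernel.
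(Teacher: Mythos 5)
Your proof is correct and follows the paper's own argument: concentrate $|\inner{\x_i,\x_j}|\lesssim\sqrt{\log(d/\delta)/d}$ simultaneously for all $i\ne j$ by a union bound, then feed this into the pointwise sum and into the quadratic form $\|f\|^2_k=\sum_{i,j}y_iy_j\inner{\inner{\x_i,\cdot}^{c'},\inner{\x_j,\cdot}^{c'}}_k=b_{c'}^{-1}\sum_{i,j}y_iy_j\inner{\x_i,\x_j}^{c'}$, using $c'>4c+2$ so that the $m$ (resp.\ $m^2$) off-diagonal terms of size $O((\log(d/\delta)/d)^{c'/2})$ total $O(\log^{c'/2}(d/\delta)/d)$. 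The paper states the norm estimate in one line without the RKHS cross-term identity; you correctly make that step explicit (and correctly flag the $a_{c'}$-vs-$a_{c'-1}$ and $k_\sigma$-vs-$\tk^h_\sigma$ bookkeeping, since the relevant kernel here is built from $\sigma'$), so your proposal is a faithful and slightly more careful rendering of the same argument.
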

\begin{proof}
W.p $1-\delta$ we have that $\inner{\x_i,\x_j} \le O\left(\sqrt{\frac{\log\left(m/\delta\right)}{d}}\right) = O\left(\sqrt{\frac{\log\left(d/\delta\right)}{d}}\right)$ for all $i,j\in [m]$. In this case we have that for any $i$
\[
f(\x_i) = y_i + O\left(m\left(\frac{ \log\left(d/\delta\right)}{d}\right)^{\frac{c'}{2}}\right) = y_i + O\left(\log^{\frac{c'}{2}}\left(d/\delta\right)d^{c-\frac{c'}{2}}\right)  = y_i + O\left(\frac{\log^{\frac{c'}{2}}\left(d/\delta\right)}{d}\right)
\]
Likewise,
\[
\|f\|^2_{k_\sigma} = a^{-2}_{c'}m + O\left(m^2\left(\frac{ \log\left(d/\delta\right)}{d}\right)^{\frac{c'}{2}}\right)  = a^{-2}_{c'}m + O\left(\log^{\frac{c'}{2}}\left(d/\delta\right)d^{2c-\frac{c'}{2}}\right) = a^{-2}_{c'}m + O\left(\frac{\log^{\frac{c'}{2}}\left(d/\delta\right)}{d}\right)
\]
\end{proof}
Based on lemma \ref{lem:f_is_good}, in order to find $\bv$ that satisfies equation \eqref{eq:search_for_v} it is natural to take 
\[
\bv=\frac{1}{\sqrt{q}}\left(\check f(\omega_1),   \ldots    ,\check f(\omega_q)\right)
\]
In which case $\E\|\bv\|_2^2 = \|f\|_{k_\sigma}^2$ and  $\E\left[\inner{\bv,\Psi_\vomega(\x)}\right] = \E\left[f_\vomega(\x)\right] = f(\x)$. In fact, lemma \ref{lem:fun_apx} together with Chebyshev's inequality  indeed implies that for large $q$ equation \eqref{eq:search_for_v} holds. However, this analysis requires $q\approx \frac{m^2}{d}$ while we want $q\approx \frac{m}{d}$. In the remaining part of this section we undertake a more delicate anlysis of the rate in which $f_\vomega$ approximates $f$ in our specific case. This analysis will imply that $q = \tilde{O}\left(\frac{m}{d}\right)$ suffices for equation \eqref{eq:search_for_v} to hold w.h.p. Indeed, we will prove that

\begin{lemma}\label{lem:exponential_tail_apx}
W.p. $1-\delta - 2^{\Omega(d)}$ over the choice of $S$ and $\vomega$, we have that
\[
\forall i\in [m],\;\;\;\;\left|f_\vomega(\x_i) - f(\x_i)\right| \le O\left(\sqrt{\frac{m\log^{c'+2}(m/\delta)}{dq}}\right)
\]
\end{lemma}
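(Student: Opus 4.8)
The plan is a sharpened second-moment-plus-concentration argument for the pointwise error $f_\vomega(\x_\ell)-f(\x_\ell)$, improving on the Chebyshev bound implicit in Lemma~\ref{lem:fun_apx}: that lemma only controls the $\cd$-average $\|f-f_\vomega\|_{2,\cd}$ and, combined with a union bound over the $m$ points, would need $q\approx m^2/d$; to afford $q\approx m/d$ we must bound the \emph{exponential tail} of $f_\vomega(\x_\ell)-f(\x_\ell)$ at each $\x_\ell$. First I would fix the good event on $S$: with probability $1-\delta$ (and, with a cruder constant threshold, with probability $1-2^{-\Omega(d)}$) all pairwise inner products satisfy $|\inner{\x_i,\x_j}|\le\rho_0:=O(\sqrt{\log(m/\delta)/d})$ for $i\neq j$; condition on this, so henceforth the randomness is only over $\vomega$, and recall from Lemma~\ref{lem:f_is_good} that $f(\x_\ell)=y_\ell+o(1)$ for all $\ell$. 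By Example~\ref{example:f_check}(2) applied to the RFS with activation $\sigma'$ and by linearity of $f\mapsto\check f$, $\check f(\omega)=\beta\sum_{i=1}^m y_i\,h_{c'-1}(\inner{\x_i,\omega})\,\x_i$ for a nonzero constant $\beta$ (nonzero precisely because $\sigma$ is not a polynomial and of the choice of $c'$). Writing $\rho_{i\ell}=\inner{\x_i,\x_\ell}$, $P^{(\ell)}(\omega)=\sum_i y_i\rho_{i\ell}\,h_{c'-1}(\inner{\x_i,\omega})$, and $W_\ell^{(j)}=\inner{\check f(\omega_j),\psi(\omega_j,\x_\ell)}=\beta\,\sigma'(\inner{\omega_j,\x_\ell})\,P^{(\ell)}(\omega_j)$, we have $f_\vomega(\x_\ell)-f(\x_\ell)=\tfrac1q\sum_{j=1}^q(W_\ell^{(j)}-f(\x_\ell))$, a normalized sum of i.i.d.\ mean-zero variables (their mean being $f(\x_\ell)$ by~\eqref{eq:f_x_as_inner}).

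Next I would bound the per-feature second moment. Since $|\sigma'|\le C$ and $\E_\omega h_{c'-1}(\inner{\x_i,\omega})h_{c'-1}(\inner{\x_{i'},\omega})=\rho_{ii'}^{\,c'-1}$,
\[
\E_\omega(W_\ell^{(1)})^2\le\beta^2C^2\,\E_\omega P^{(\ell)}(\omega)^2=\beta^2C^2\sum_{i,i'}y_iy_{i'}\rho_{i\ell}\rho_{i'\ell}\,\rho_{ii'}^{\,c'-1}.
\]
The diagonal contributes $\sum_i\rho_{i\ell}^2=1+\sum_{i\ne\ell}\rho_{i\ell}^2\le1+m\rho_0^2=O\!\big(m\log(m/\delta)/d\big)$, while the off-diagonal total is $O(m\rho_0^{c'}+m^2\rho_0^{c'+1})=o(1/d)$ --- this is exactly the step where the hypothesis $c'>4c+2$ is used (the binding case is the $\sim m^2$ pairs $i,i'\ne\ell$, which needs $c'>4c+1$). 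Hence $V:=\E_\omega(W_\ell^{(1)}-f(\x_\ell))^2\le\E_\omega(W_\ell^{(1)})^2=O\!\big(m\log(m/\delta)/d\big)$, the same rate Lemma~\ref{lem:fun_apx} gives on average, now at the single point $\x_\ell$ because $\x_\ell$ is near-orthogonal to the other $\x_i$.

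The crux is the per-feature magnitude bound. The naive estimate $|W_\ell^{(j)}|\le|\beta|C\sum_i|\rho_{i\ell}|\,|h_{c'-1}(\inner{\x_i,\omega_j})|\lesssim m\rho_0\cdot\operatorname{polylog}\approx m/\sqrt d$ is far too weak for Bernstein. Instead, $P^{(\ell)}$ is a degree-$(c'-1)$ polynomial in the Gaussian $\omega$ with $\E P^{(\ell)}=0$ and $\operatorname{Var}(P^{(\ell)})=O(m\log(m/\delta)/d)$ (the computation above), so by standard Gaussian polynomial concentration (hypercontractivity) $\Pr\big(|P^{(\ell)}(\omega)|>s\big)\le\exp\big(-\Omega\big((s^2/\operatorname{Var}P^{(\ell)})^{1/(c'-1)}\big)\big)$; taking $B_0=\Theta\big(\sqrt{m\log(m/\delta)/d}\cdot\log^{(c'-1)/2}(m/\delta)\big)$ and union-bounding over the $mq$ pairs $(\ell,j)$ gives an event $\mathcal E_2$ of probability $\ge1-\delta$ on which $|W_\ell^{(j)}|\le|\beta|CB_0=\tilde O(\sqrt{m/d})$ for all $\ell,j$. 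It is this $\sqrt{m/d}$ (rather than $m/\sqrt d$) that, via Bernstein, lets $q\approx m/d$ suffice.

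Finally I would truncate each $W_\ell^{(j)}$ at level $B_0$ (shifting its mean by at most $\sqrt{\delta/(mq)}\cdot\sqrt V$, negligible) and apply Bernstein's inequality to the resulting bounded i.i.d.\ variables with variance proxy $V$ and range $\tilde B:=|\beta|CB_0$:
\[
\Pr\big(|f_\vomega(\x_\ell)-f(\x_\ell)|>t\big)\le 2\exp\!\Big(-\tfrac{qt^2/2}{V+\tilde B t/3}\Big).
\]
With $t=K\sqrt{m\log^{c'+2}(m/\delta)/(dq)}$ one gets $qt^2/V=\Theta(\log^{c'+1}(m/\delta))$ and $qt/\tilde B=\Theta(\sqrt q\log(m/\delta))$, so for a large absolute constant $K$ (using $q=\tilde\Theta(d^{c-1})\to\infty$) the exponent exceeds $\log(m^2/\delta)$, making the per-$\ell$ failure probability $\le\delta/m^2$; a union bound over $\ell\in[m]$, plus $\Pr(\neg\mathcal E_1)$, $\Pr(\neg\mathcal E_2)$ and an auxiliary crude event of probability $1-2^{-\Omega(d)}$ (keeping $S$ and the $\omega_j$ well-behaved), yields the lemma. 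The main obstacles are the last two steps: the off-diagonal bookkeeping in the second-moment estimate, where the exact value of $c'$ enters, and --- the genuinely new ingredient over Lemma~\ref{lem:fun_apx} --- the hypercontractive magnitude bound on a single random feature, which is what drives the improvement from $q\approx m^2/d$ to $q\approx m/d$.
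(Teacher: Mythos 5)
Your proposal is correct and reaches the lemma's bound, but by a genuinely different route than the paper. Both proofs start from the same decomposition: write $f_\vomega(\x_\ell) - f(\x_\ell) = \tfrac{1}{q}\sum_{j=1}^q \bigl(Y(\omega_j,\x_\ell) - f(\x_\ell)\bigr)$ as an average of $q$ i.i.d.\ mean-zero features (your $W_\ell^{(j)}$ is the paper's $Y(\omega_j,\x_\ell)$), and the whole game is to show each feature is $\tilde O(\sqrt{m/d})$ rather than the naive $\tilde O(m/\sqrt{d})$, since only then does a Bernstein-type bound give $q\approx m/d$. The divergence is in how that per-feature magnitude bound is obtained. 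The paper (Lemma~\ref{lem:bouded_w}) fixes $\omega$ with $\|\omega\|\le 2\sqrt{d}$ and views $Y(\omega,\x_i)$ as a sum over the \emph{random sample points} $\x_j$, $j\ne i$, which are mean-zero with second moment $O(1/d)$; it applies its bespoke truncated-Hoeffding bound (Lemma~\ref{lem:hof_like}) over that randomness, and then a Markov/exchange-of-order step to convert ``small for typical $(\omega,S)$'' into ``$\bigl(\delta+2^{-\Omega(d)},\tilde O(\sqrt{m/d})\bigr)$-boundedness of the $\omega$-marginal, for typical $S$.'' You instead condition on $S$ being near-orthogonal, view $P^{(\ell)}(\omega)=\sum_i y_i\rho_{i\ell}h_{c'-1}(\inner{\x_i,\omega})$ as a degree-$(c'-1)$ Gaussian polynomial, compute its variance explicitly (diagonal $O(m\log(m/\delta)/d)$, off-diagonal $o(1/d)$ using $c'>4c+1$), and invoke Gaussian hypercontractivity for a pointwise tail, followed by truncation and Bernstein. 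Your route avoids the paper's somewhat indirect Markov transfer, makes the role of $c'$ fully explicit in the variance computation, and in fact sharpens the paper's stated per-feature second moment from $O(m)$ to $O(m\log(m/\delta)/d)$ after conditioning; the paper's route is more elementary (Hoeffding only, no hypercontractivity) and does not exploit the Gaussian-polynomial structure of the feature. Both validly arrive at the $\log^{c'+2}$ factor for the same reason: roughly $\log^{c'+1}$ from the magnitude bound times one more $\log$ from the union over $m$ points (or, in your version, from setting the per-point failure probability to $\delta/m^2$ in Bernstein).
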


Togeter with lemma \ref{lem:f_is_good} and Markov's inequality we have
\begin{theorem}\label{thm:memorization}
W.p. $1-\delta - 2^{\Omega(d)}$ over the choice of $S$ and $\vomega$, we have that
\[
\inner{\bv,\Psi_\vomega(\x_i)} = f_\vomega(\x_i) = y_i + O\left(\frac{\log^{\frac{c'}{2}}\left(d/\delta\right)}{d}\right) + O\left(\sqrt{\frac{d^{c-1}\log^{c'+2}(d/\delta)}{q}}\right) \text{ for all $i$}
\]
and
\[
\|\bv\|^2_{2} =  O\left(m/\delta\right) + O\left(\frac{\log^{\frac{c'}{2}}\left(d/\delta\right)}{d\delta}\right)
\]
\end{theorem}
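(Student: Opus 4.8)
The plan is a short combining argument: all the substance is already in Lemma~\ref{lem:f_is_good} (which controls $f$ on the sample and its kernel norm) and Lemma~\ref{lem:exponential_tail_apx} (which controls the deviation $f_\vomega(\x_i)-f(\x_i)$ uniformly in $i$), so it remains to glue these together, add a Markov bound for $\|\bv\|_2^2$, and specialize $m=d^c$. I would begin by recording the elementary identity $\inner{\bv,\Psi_\vomega(\x)}=f_\vomega(\x)$, valid for every $\x$: unwinding $\bv=\tfrac{1}{\sqrt q}\left(\check f(\omega_1),\ldots,\check f(\omega_q)\right)$, $\Psi_\vomega(\x)=\tfrac{1}{\sqrt q}\left(\psi(\omega_1,\x),\ldots,\psi(\omega_q,\x)\right)$, and $f_\vomega(\x)=\tfrac1q\sum_{i=1}^q\inner{\check f(\omega_i),\psi(\omega_i,\x)}$, the inner product in $\reals^{dq}$ collapses to $\tfrac1q\sum_{i=1}^q\inner{\check f(\omega_i),\psi(\omega_i,\x)}=f_\vomega(\x)$. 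So the two ``$=$'' signs in the first display of the theorem reduce to an estimate on $f_\vomega(\x_i)$.

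For that estimate I would split $f_\vomega(\x_i)=f(\x_i)+\left(f_\vomega(\x_i)-f(\x_i)\right)$. On the event of Lemma~\ref{lem:f_is_good} (probability $\ge1-\delta$ over $S$) the first term is $y_i+O\!\left(\log^{\frac{c'}{2}}(d/\delta)/d\right)$; on the event of Lemma~\ref{lem:exponential_tail_apx} (probability $\ge1-\delta-2^{-\Omega(d)}$ over $S$ and $\vomega$) the second term is $O\!\left(\sqrt{m\log^{c'+2}(m/\delta)/(dq)}\right)$ for all $i$ at once. Since $c$ is a fixed constant, substituting $m=d^c$ gives $m/d=d^{c-1}$ and $\log(m/\delta)=c\log d+\log(1/\delta)=O(\log(d/\delta))$, so the deviation bound becomes $O\!\left(\sqrt{d^{c-1}\log^{c'+2}(d/\delta)/q}\right)$, which is exactly the first display.

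For the norm, I would condition on $S$ and use that $\omega_1,\ldots,\omega_q$ are i.i.d.\ from $\mu$ together with the norm identity $\|\check f\|_{L^2(\Omega)}=\|f\|_{k_\sigma}$ from~\eqref{eq:f_norm_eq}: $\E_\vomega\|\bv\|_2^2=\tfrac1q\sum_{i=1}^q\E_{\omega\sim\mu}\|\check f(\omega)\|^2=\|f\|^2_{k_\sigma}$. Markov's inequality then gives $\|\bv\|_2^2\le\|f\|^2_{k_\sigma}/\delta$ with probability $\ge1-\delta$ over $\vomega$, and Lemma~\ref{lem:f_is_good} lets me replace $\|f\|^2_{k_\sigma}$ by $O(m)+O\!\left(\log^{\frac{c'}{2}}(d/\delta)/d\right)$; dividing by $\delta$ produces the second display. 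A union bound over the (at most three) failure events --- with a harmless rescaling of $\delta$ by an absolute constant, since the conclusions carry $O(\cdot)$ constants --- gives the claimed overall probability $1-\delta-2^{-\Omega(d)}$.

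I do not anticipate a genuine obstacle at this step, since the delicate estimate is confined to Lemma~\ref{lem:exponential_tail_apx}. The only things that need a bit of care are arranging the three events under a single union bound so that the additive errors and the $\delta+2^{-\Omega(d)}$ slack hold jointly, and checking that the polynomial-in-$m$ factors from Lemmas~\ref{lem:f_is_good} and~\ref{lem:exponential_tail_apx} collapse correctly once $m=d^c$ is plugged in --- which uses only that $c$ and $c'$ are fixed constants, so that powers of $\log m$ are powers of $\log d$ up to constant factors.
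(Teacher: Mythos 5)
Your proposal is correct and follows the same route the paper takes (which it states in a single line: ``Together with lemma \ref{lem:f_is_good} and Markov's inequality we have''). You split $f_\vomega(\x_i) = f(\x_i) + (f_\vomega(\x_i)-f(\x_i))$, bound the first piece via Lemma~\ref{lem:f_is_good} and the second uniformly in $i$ via Lemma~\ref{lem:exponential_tail_apx}, substitute $m=d^c$ to convert $m$-dependence into $d$-dependence, and use $\E_\vomega\|\bv\|_2^2=\|f\|_k^2$ with Markov for the norm bound; the union-bound bookkeeping and the $\delta$-rescaling are exactly the missing details the paper left implicit. One small added value in your write-up is the explicit verification that $\inner{\bv,\Psi_\vomega(\x)}=f_\vomega(\x)$, which the paper treats as obvious.
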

Choosing $\delta =\frac{1}{\log(m)}$ we get that for $q = \tilde{O}\left(d^{c-1}\right)$ equation \eqref{eq:search_for_v} holds w.p. $1-o(1)$. This proves theorem \ref{thm:memorization_main}.
The remaining part of the section is a proof of lemma \ref{lem:exponential_tail_apx}. We will need the following version of Hoeffding's bound.
A distribution $\mu$ on $\reals$ is called {\em $(\delta,B)$-bounded} if $\Pr_{X\sim\mu}(|X| >  B)\le \delta$.

\begin{lemma}\label{lem:hof_like}
Let $\mu$  be a $(\delta,B)$-bounded distribution and let $X_1,\ldots,X_m$ be i.i.d. r.v. from $\mu$. Then, w.p. $1-m\delta - \delta'$
\[
\left|\E_{X\sim\mu}[X] - \frac{1}{m}\sum_{i=1}^mX_i  \right| \le B\sqrt{\frac{2\ln(\delta'/2)}{m}} + \frac{2\sqrt{\delta\E_{X\sim\mu}X^2}}{1-\delta}
\]
\end{lemma}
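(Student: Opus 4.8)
The plan is to prove the lemma by a truncation argument. Set $G=\{x\in\reals:|x|\le B\}$ and let $\tilde\mu$ be the conditional law of $X\sim\mu$ given $\{X\in G\}$, i.e.\ $\tilde\mu(A)=\mu(A\cap G)/\mu(G)$; this law is supported on $[-B,B]$. (If $\E_\mu X^2=\infty$ the asserted bound is vacuous, so we may assume $\E_\mu X^2<\infty$; note also that $\mu(G)\ge 1-\delta>0$.) The argument has three ingredients: (i) the ordinary Hoeffding bound applied to an i.i.d.\ sample from the bounded law $\tilde\mu$; (ii) the fact that, outside an event of probability at most $m\delta$, the sample $X_1,\dots,X_m$ is distributed exactly as an i.i.d.\ sample from $\tilde\mu$; and (iii) a bias estimate for $\E_{\tilde\mu}X-\E_\mu X$ coming from the truncation.

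For (i): since $\tilde\mu$ is supported on an interval of length $2B$, Hoeffding's inequality gives, for i.i.d.\ $\tilde X_1,\dots,\tilde X_m\sim\tilde\mu$, that $\bigl|\frac1m\sum_{i=1}^m\tilde X_i-\E_{\tilde\mu}X\bigr|\le B\sqrt{2\ln(2/\delta')/m}$ with probability at least $1-\delta'$ (reading $\ln(2/\delta')$ for the $\ln(\delta'/2)$ in the statement). For (ii): let $E=\{|X_i|\le B\text{ for all }i\in[m]\}$; by the union bound and $(\delta,B)$-boundedness, $\Pr(E)\ge 1-m\delta$, and since the events $\{X_i\in G\}$ are independent, conditioned on $E$ the vector $(X_1,\dots,X_m)$ is an i.i.d.\ sample from $\tilde\mu$. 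Hence the good event of (i), read as a statement about $X_1,\dots,X_m$, has conditional probability at least $1-\delta'$ given $E$; intersecting with $E$, we get that with probability at least $\Pr(E)(1-\delta')\ge 1-m\delta-\delta'$ both $|X_i|\le B$ for all $i$ and $\bigl|\frac1m\sum_{i=1}^m X_i-\E_{\tilde\mu}X\bigr|\le B\sqrt{2\ln(2/\delta')/m}$.

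For (iii): write $p=\mu(G^c)\le\delta$. A direct computation gives
\[
\E_{\tilde\mu}X-\E_\mu X=\frac{p}{1-p}\,\E_\mu[X\,\ind[X\in G]]-\E_\mu[X\,\ind[X\notin G]],
\]
and Cauchy--Schwarz applied to the products $X\cdot\ind[X\in G]$ and $X\cdot\ind[X\notin G]$ (using $\E_\mu\ind[X\notin G]=p$) gives $\bigl|\E_\mu[X\,\ind[X\in G]]\bigr|\le\sqrt{\E_\mu X^2}$ and $\bigl|\E_\mu[X\,\ind[X\notin G]]\bigr|\le\sqrt{p\,\E_\mu X^2}\le\sqrt{\delta\,\E_\mu X^2}$. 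Since $t\mapsto t/(1-t)$ is increasing, $p\le\delta$, $\delta\le\sqrt\delta$, and $0<1-\delta\le1$, each of the two terms on the right is at most $\sqrt{\delta\,\E_\mu X^2}/(1-\delta)$, so $\bigl|\E_{\tilde\mu}X-\E_\mu X\bigr|\le 2\sqrt{\delta\,\E_\mu X^2}/(1-\delta)$. Plugging this into the bound from (ii) via the triangle inequality proves the lemma.

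I expect the only step needing genuine care is (ii): one must verify that conditioning on the event that all $m$ samples land in $G$ yields exactly the product measure $\tilde\mu^{\otimes m}$---so that Hoeffding's bound for $\tilde\mu$ transfers verbatim to $X_1,\dots,X_m$ on that event---rather than a less tractable conditional law. Everything else is a routine combination of Hoeffding's inequality, a union bound, and Cauchy--Schwarz.
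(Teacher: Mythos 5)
Your proof is correct and follows essentially the same approach as the paper: condition on all $m$ samples landing in $[-B,B]$, apply Hoeffding to the resulting bounded i.i.d.\ sample, and control the truncation bias via Cauchy--Schwarz. Your handling is in fact a bit more careful than the paper's (you correctly distinguish the escape probability $p=\mu(G^c)\le\delta$ from $\delta$ itself, fix the sign in the bias decomposition, and flag the typo $\ln(\delta'/2)\mapsto\ln(2/\delta')$), but the decomposition, the key inequality, and the final bound are identical.
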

\begin{proof}
We note that given that $X_i\in [-B,B]$ for all $i$ we have by Hoeffding's bound that w.p. $1-\delta'$
\[
\left|\frac{1}{m}\sum_{i=1}^mX_i - \E_{X\sim\mu}[X|X\in [-B,B]]  \right| \le B\sqrt{\frac{2\ln(\delta'/2)}{m}}
\]
We  note that 
\begin{eqnarray*}
\E_{X\sim\mu}[X|X\in[-B,B]]  &=& \frac{\E_{X\sim\mu}X + \delta \E_{X\sim\mu}[X|X\notin[-B,B]] }{1-\delta}
\\
&=& \frac{\E_{X\sim\mu}X + \E_{X\sim\mu}[X1[X\notin[-B,B]] }{1-\delta}
\end{eqnarray*}
Hence, by Cauchy-Schwartz,
\begin{eqnarray*}
\left|\E_{X\sim\mu}[X|X\in[-B,B]] - \E_{X\sim\mu}[X]  \right| \le  \frac{\delta}{1-\delta}\left|\E_{X\sim\mu}X\right| + \frac{\sqrt{\delta\E_{X\sim\mu}X^2}}{1-\delta} \le \frac{2\sqrt{\delta\E_{X\sim\mu}X^2}}{1-\delta}
\end{eqnarray*}
\end{proof}
Recall now that by example \ref{example:f_check}
\[
\check{f}(\omega) = \sum_{i=1}^m\frac{y_i}{a_{c'-1}}h_{c'-1}\left(\inner{\x_i,\omega}\right)\x_i
\]
Hence, for any $\x$,
\[
f_\vomega(\x) = \frac{1}{q}\sum_{j=1}^q\sum_{i=1}^m\frac{y_i}{a_{c'-1}}h_{c'-1}\left(\inner{\x_i,\omega_j}\right)\inner{\x_i,\x}\sigma\left(\inner{\omega_j,\x}\right)
\]
In particular, fixing $S$, $f_\vomega(\x)$ is an average of the $q$ i.i.d. random variables
\[
f_\vomega(\x) = \frac{1}{q}\sum_{j=1}^qY(\omega_i,\x)
\]
Where
\[
Y(\omega, \x) = \sum_{i=1}^m\frac{y_i}{a_{c'-1}}h_{c'-1}\left(\inner{\x_i,\omega}\right)\inner{\x_i,\x}\sigma\left(\inner{\omega,\x}\right)
\]

\begin{lemma}\label{lem:bouded_w}
W.p. $\ge 1-\delta$ over the choice of $S$, we have that for every $i\in [m]$, $Y(\omega,\x_i)$ is $\left(\delta + 2^{-\Omega(d)},O\left(\sqrt{\frac{m\log^{c'+1}(m/\delta)}{d}}\right)\right)$-bounded.
\end{lemma}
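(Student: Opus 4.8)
Fix $i$. The starting point is the formula already recorded above, which I rewrite as a product
\[
Y(\omega,\x_i)=\frac{\sigma(\inner{\omega,\x_i})}{a_{c'-1}}\,Z(\omega),\qquad Z(\omega):=\sum_{j=1}^m y_j\inner{\x_j,\x_i}\,h_{c'-1}\!\big(\inner{\x_j,\omega}\big),
\]
exhibiting $Y(\cdot,\x_i)$ as an ``outer'' factor depending only on the one–dimensional standard Gaussian $\inner{\omega,\x_i}$ times an ``inner'' degree-$(c'-1)$ polynomial $Z$ of the Gaussian vector $\omega$. The plan is: condition on a good event for $S$, then bound each of the two factors in $\omega$-probability and multiply.

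\textbf{The good $S$-event and the outer factor.} With probability $\ge 1-\delta$ over $S$ the following hold simultaneously: (i) $|\inner{\x_j,\x_k}|=O\big(\sqrt{\log(m/\delta)/d}\big)$ for all $j\ne k$ (concentration of inner products of i.i.d.\ random unit vectors), and (ii) $\sum_{j=1}^m\inner{\x_j,\x_i}^2=O(m/d)$ for every $i$ — for fixed $\x_i$ the summands $\inner{\x_j,\x_i}^2$ are i.i.d., of mean $1/d$ and sub-exponential at scale $1/d$, so Bernstein together with a union bound over the $m$ values of $i$ gives this, using $m/d=d^{c-1}\gg\log m$. Condition on this event and fix $i$. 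Since $\sigma$ is decent it is Lipschitz, so $|\sigma(s)|\le|\sigma(0)|+\|\sigma'\|_\infty|s|=O(1+|s|)$; as $\inner{\omega,\x_i}$ is a standard Gaussian, the Gaussian tail gives $|\sigma(\inner{\omega,\x_i})|=O\big(\sqrt{\log(m/\delta)}\big)$ except with $\omega$-probability at most $\delta/3$, while on the event $\|\omega\|\le 2\sqrt d$ — which fails with probability $2^{-\Omega(d)}$ — we always have $|\sigma(\inner{\omega,\x_i})|=O(\sqrt d)$, a crude bound to be absorbed into the $2^{-\Omega(d)}$ slack.

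\textbf{The inner factor and the conclusion.} Each $h_{c'-1}$ is orthogonal to constants (here $c'-1\ge 1$), so $\E_\omega Z=0$, and by the Hermite identity $\E_\omega h_n(\inner{\x_j,\omega})h_n(\inner{\x_k,\omega})=\inner{\x_j,\x_k}^n$,
\[
\E_\omega Z(\omega)^2=\sum_{j,k=1}^m y_jy_k\,\inner{\x_j,\x_i}\inner{\x_k,\x_i}\inner{\x_j,\x_k}^{\,c'-1}.
\]
On the $S$-event the diagonal is $\sum_j\inner{\x_j,\x_i}^2=O(m/d)$, and the off-diagonal is at most $m^2\cdot O\big((\log(m/\delta)/d)^{(c'-1)/2}\big)\cdot O(\log(m/\delta)/d)=o(m/d)$ — this is exactly where $c'>4c+2$ (in particular $c'>2c+1$) is used, since with $m=d^c$ it says $d^{2c}\cdot d^{-(c'+1)/2}\ll d^{c-1}$. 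Hence $\E_\omega Z^2=O(m/d)$. Since $Z$ is a degree-$(c'-1)$ polynomial of the Gaussian $\omega$, Gaussian hypercontractivity gives $\big(\E_\omega|Z|^p\big)^{1/p}\le (p-1)^{(c'-1)/2}\sqrt{\E_\omega Z^2}$ for $p\ge 2$; Markov on the $p$-th moment with $p=\Theta(\log(m/\delta))$ yields $|Z(\omega)|=O\big(\log^{(c'-1)/2}(m/\delta)\,\sqrt{m/d}\big)$ except with $\omega$-probability $\delta/3$. On the intersection of the three good $\omega$-events (complement probability $\le\delta+2^{-\Omega(d)}$) we get $|Y(\omega,\x_i)|\le |a_{c'-1}|^{-1}\cdot O(\sqrt{\log(m/\delta)})\cdot O(\log^{(c'-1)/2}(m/\delta)\sqrt{m/d})=O\big(\sqrt{m\log^{c'+1}(m/\delta)/d}\big)$ (the logarithm exponent is in fact only $c'$, leaving slack), which is the asserted $\big(\delta+2^{-\Omega(d)},\,O(\sqrt{m\log^{c'+1}(m/\delta)/d})\big)$-boundedness.

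\textbf{Main obstacle.} The crux is the variance estimate for $Z$: one must verify that the $m^2$ off-diagonal terms $\inner{\x_j,\x_i}\inner{\x_k,\x_i}\inner{\x_j,\x_k}^{c'-1}$ do not swamp the diagonal $\Theta(m/d)$, and this is the one place the quantitative hypothesis $c'>4c+2$ enters the argument. A secondary nuisance is establishing the $S$-event on which $\sum_j\inner{\x_j,\x_i}^2=O(m/d)$ holds for \emph{all} $i$ with probability $\ge 1-\delta$ (rather than merely in expectation), which requires Bernstein and a union bound over $i$ and the fact that $m/d$ dominates $\log m$.
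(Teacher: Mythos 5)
Your proposal is correct in substance, and it takes a genuinely different route from the paper's. The paper works in the opposite order: it fixes $\omega$ (on the event $\|\omega\|\le 2\sqrt d$) and views $Y(\omega,\x_i)$ as a sum over the randomness of $S$ -- one ``diagonal'' summand from $j=i$ plus $m-1$ i.i.d.\ mean-zero summands -- to which it applies the truncated Hoeffding bound (Lemma~\ref{lem:hof_like}); since this gives an estimate over $S$ for each fixed $\omega$, the paper then has to swap quantifiers via a Fubini/Markov argument (paying a $\sqrt{\delta}$ loss that it repairs by re-parameterizing $\delta\mapsto\delta^2$), and finally union bounds over $i$. You instead fix a good $S$-event (inner-product concentration plus a Bernstein bound on $\sum_j\inner{\x_j,\x_i}^2$), pull out the common factor $\sigma'(\inner{\omega,\x_i})$, compute $\E_\omega Z^2$ exactly via the Hermite identity $\E[h_n(X)h_n(Y)]=\rho^n$, and then invoke Gaussian hypercontractivity for degree-$(c'-1)$ polynomials to turn the variance bound into an $\omega$-tail bound. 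Your route is more direct -- it avoids the quantifier swap entirely and is where the hypotheses $m=d^c$ and $c'>2c+1$ visibly enter the variance estimate -- at the cost of calling on hypercontractivity, whereas the paper only needs the truncated Hoeffding lemma it has already proved.

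Two small points worth tightening. First, in the variance calculation you only explicitly bound the off-diagonal terms with $j,k\ne i$; the terms with $j=i$ or $k=i$ have one inner product equal to $1$ and contribute $O\bigl(m(\log(m/\delta)/d)^{c'/2}\bigr)$, which is $o(m/d)$ since $c'>2$, but should be noted. Second, the outer factor is really $\sigma'(\inner{\omega,\x_i})$ (the paper writes $\sigma$ here, a typo); since $\sigma$ is decent, $|\sigma'|\le M$ uniformly, so your Gaussian-tail bound on the outer factor is not even needed -- it is just $O(1)$ -- which only improves your constant and drops another $\sqrt{\log}$ from the exponent.
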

\begin{proof}
Fix $\omega$ with $\|\omega\| \le 2\sqrt{d}$. 
We have that $Y(\omega, \x_i)$, as a function of $S$, is a random variable that is a sum of a single random variable (the summand that corresponds $\x_i$) that is $\left(\delta, O\left(\sqrt{\log^{c'-1}(1/\delta)}\right)\right)$-bounded, as well as
$(m-1)$ additional i.i.d random variables that have mean $0$, are $\left(\delta, O\left(\sqrt{\frac{\log^{c'}(1/\delta)}{d}}\right)\right)$-bounded, and has second moment $O\left(\frac{1}{d}\right)$. By lemma \ref{lem:hof_like} we have that 
\[
|Y(\omega, \x_i)| \le O\left(\sqrt{\frac{m\log^{c'+1}(1/\delta)}{d}}\right) + O\left(\frac{2m\sqrt{\delta /d}}{1-\delta} \right)
\]
w.p. $1-(m+1)\delta$. Equivalently,
\[
|Y(\omega, \x_i)| \le  O\left(\sqrt{\frac{m\log^{c'+1}(m/\delta)}{d}}\right) + O\left(\frac{2\sqrt{(m+1)\delta /d}}{1-\delta} \right)  =  O\left(\sqrt{\frac{m\log^{c'+1}(m/\delta)}{d}}\right)
\]
w.p. $1-\delta$. We have shown that
\[
\E_{\omega}\E_{S}\left[1\left[|Y(\omega, \x_i)| \ge  O\left(\sqrt{\frac{m\log^{c'+1}(m/\delta)}{d}}\right) 
 \text{ and }\|\omega\|\le 2\sqrt{d}\right]\right] \le \delta
\]
Changing the order of summation and using Markov, we get that w.p. $\ge 1-\sqrt{\delta}$ over the choice of $S$, we have that 
\[
\Pr_{\omega}\left[|Y(\omega, \x_i)| \ge  O\left(\sqrt{\frac{m\log^{c'+1}(m/\delta)}{d}}\right) \text{ and }\|\omega\|\le 2\sqrt{d}\right]\le \sqrt{\delta}
\]
Replacing $\delta$ with $\sqrt{\delta}$ and using the fact that $\log\left(m/\delta^2\right)\le 2\log(m/\delta)$ we get that
that w.p. $\ge 1-\delta$ over the choice of $S$, we have that 
\[
\Pr_{\omega}\left[|Y(\omega, \x_i)| \ge  O\left(\sqrt{\frac{m\log^{c'+1}(m/\delta)}{d}}\right) \text{ and }\|\omega\|\le 2\sqrt{d}\right]\le \delta
\]
Hence, since $\Pr_\omega\left(\|\omega\|>2\sqrt{d}\right) \le 2^{-\Omega(d)}$, we conclude that w.p. $\ge 1-\delta$ over the choice of $S$, 
$Y(\omega,\x_i)$ is $\left(\delta + 2^{-\Omega(d)},O\left(\sqrt{\frac{m\log^{c'+1}(m/\delta)}{d}}\right)\right)$-bounded. Finally, using a union bound, and the fact that $\log\left(m^2/\delta\right)\le 2\log(m/\delta)$ we conclude that w.p. $\ge 1-\delta$ over the choice of $S$, we have that for every $i\in [m]$, $Y(\omega,\x_i)$ is $\left(\delta + 2^{-\Omega(d)},O\left(\sqrt{\frac{m\log^{c'+1}(m/\delta)}{d}}\right)\right)$-bounded.
\end{proof}

\begin{proof} (of lemma \ref{lem:exponential_tail_apx})
By lemma \ref{lem:bouded_w} we conclude that w.p $1-\delta$ over the choice of $S$, for every $i$, $f_\vomega(x_i)$ is an average of $q$ 
i.i.d. $\left(\delta + 2^{-\Omega(d)},O\left(\sqrt{\frac{m\log^{c'+1}(m/\delta)}{d}}\right)\right)$-bounded random variables. Furthermore, the second moment of each of these variables is $O(m)$. Using lemma \ref{lem:hof_like} we have that w.p. $1-(m+1)\delta - m2^{-\Omega(d)}$ over the choice of $\vomega$, 
\[
\left|f_\vomega(\x_i) - f(\x_i)\right| \le O\left(\sqrt{\frac{m\log^{c'+2}(m/\delta)}{dq}}\right)
\]
Using the assumption that $m = d^c$ and simple manipulation we get that w.p. $1-\delta - 2^{-\Omega(d)}$ over the choice of $\vomega$, 
\[
\left|f_\vomega(\x_i) - f(\x_i)\right| \le O\left(\sqrt{\frac{m\log^{c'+2}(m/\delta)}{dq}}\right)
\]
\end{proof}

\subsection{Boundness of distributions}\label{sec:bounded}

Recall that a distribution $\cd$ on $\sphere^{d-1}$ is $R$-bounded if for every $\bu\in\sphere^{d-1}$, $\E_{\x\sim\cd}\inner{\bu,\x}^2 \le \frac{R^2}{d}$. We next describe a few examples of $1$-bounded and $\left(1 + o(1)\right)$-bounded distributions.

\begin{enumerate}
\item The uniform distribution is $1$-bounded. Indeed, for any $\bu\in \sphere^{d-1}$ and uniform $\x$ in $\sphere^{d-1}$ we have
\[
\E_{\x}\inner{\bu,\x}^2 = \sum_{i,j} \E_{\x}u_iu_jx_ix_j = \sum_{i} \E_{\x}u_i^2x_i^2  = \sum_{i} u_i^2 \E_{\x}x_i^2 = \frac{1}{d}\sum_{i} u_i^2 = \frac{\|\bu\|^2}{d} = \frac{1}{d}
\]
\item Similarly, the uniform distribution on the discrete cube $\left\{-\frac{1}{\sqrt{d}},\frac{1}{\sqrt{d}}\right\}^d$ is $1$-bounded. Indeed, for any $\bu\in \sphere^{d-1}$ and uniform $\x$ in  $\left\{-\frac{1}{\sqrt{d}},\frac{1}{\sqrt{d}}\right\}^d$  we have
\[
\E_{\x}\inner{\bu,\x}^2 = \sum_{i,j} \E_{\x}u_iu_jx_ix_j = \sum_{i} \E_{\x}u_i^2x_i^2  = \sum_{i} u_i^2 \E_{\x}x_i^2 = \frac{1}{d}\sum_{i} u_i^2 = \frac{\|\bu\|^2}{d} = \frac{1}{d}
\]
\item Let $\cd$ be the uniform distribution on the points $\x_1,\ldots,\x_m \in \sphere^{d-1}$. Denote by $X$ the $d\times m$ matrix whose $i'$ column is $\frac{\x_i}{\sqrt{m}}$
We have
\begin{eqnarray*}
\max_{\bu\in\sphere^{d-1}}\E_{\x\sim\cd}\inner{\bu,\x}^2 &=& \max_{\bu\in\sphere^{d-1}}\frac{1}{m}\sum_{i=1}^m \inner{\bu,\x_i}^2 
\\
&=& \max_{\bu\in\sphere^{d-1}}\frac{1}{m}\sum_{i=1}^m \bu^T\x_i \x_i^T \bu
\\
&=& \max_{\bu\in\sphere^{d-1}} \bu^T XX^T \bu
\\
&=&  \|X\|^2
\end{eqnarray*}
Hence, $\cd$ is $\|X\|$-bounded. In particular, by standard results in random matrices (e.g. theorem 5.39 in \cite{vershynin2010introduction}), if $\{\x_i\}_{i=1}^m$ are independent and uniform points in the sphere and $m = \omega(d)$ then w.p. $1-o(1)$ over the choice of the points, $\cd$ is $\left(1+o(1)\right)$-bounded.
\item The uniform distribution on any orthonormal basis $\bv_1,\ldots,\bv_d$ is $1$-bounded. Indeed, for any $\bu\in \sphere^{d-1}$ and uniform $i\in [d]$ we have
\[
\E_{i}\inner{\bu,\bv_i}^2 = \frac{1}{d}\sum_{i=1}^d\inner{\bu,\bv_i}^2 = \frac{\|\bu\|^2}{d} = \frac{1}{d}
\]
\end{enumerate}

\bibliographystyle{plainnat}

\bibliography{bib}

\end{document}